\documentclass{article}
\usepackage{graphicx}
\usepackage[numbers,sort&compress]{natbib}
\usepackage[preprint]{neurips_2026}
\newcommand{\ourmethod}{$\mu$-GRPO}
\usepackage{wrapfig}
\usepackage{amsmath,amssymb,amsthm,mathtools}
\usepackage{enumitem}
\usepackage[utf8]{inputenc} 
\usepackage[T1]{fontenc}    
\usepackage{hyperref}       
\usepackage{url}            
\usepackage{booktabs}       
\usepackage{amsfonts}       
\usepackage{nicefrac}       
\usepackage{microtype}      
\usepackage{xcolor}         
\usepackage[capitalize,noabbrev]{cleveref}
\usepackage{colortbl}
\usepackage{multirow}
\usepackage[most]{tcolorbox}
\usepackage{subcaption}

\newtheorem{theorem}{Theorem}
\newtheorem{corollary}{Corollary}
\newtheorem{remark}{Remark}

\definecolor{plotblue}{RGB}{70, 114, 196}
\definecolor{plotyellow}{RGB}{230, 175, 45}

\title{How Off-Policy Can GRPO Be? \\ \boldmath $\mu$-GRPO for Efficient LLM Reinforcement Learning}

\author{
  Minghao Tian \quad Yunfei Xie \quad Chen Wei \\
  Rice University
}

\begin{document}
\maketitle

\begin{abstract}
Group Relative Policy Optimization (GRPO) has been a key driver of recent progress in reinforcement learning with verifiable rewards (RLVR) for large language models, but it is typically trained in a low-staleness, near-on-policy regime that incurs substantial system overhead. We ask a simple question: How off-policy can GRPO be? We show that GRPO-style algorithms can tolerate substantially larger rollout staleness than previously assumed, and propose $\mu$-GRPO, an RL training framework that organizes training into a small number (\textit{e.g.}, four) of large sequential generation--optimization stages. This design induces high rollout staleness while greatly reducing rollout--optimization switching overhead. To stabilize learning under stale data, $\mu$-GRPO combines relaxed clipping, which preserves useful stale-rollout gradients, with negative-advantage veto, which removes destabilizing post-trigger suffix updates in negative-advantage responses. Across five language models and multiple math reasoning benchmarks, $\mu$-GRPO matches or exceeds the performance of standard GRPO while achieving around 2$\times$ speedup in wall-clock training time, establishing a substantially improved performance--efficiency trade-off for LLM reinforcement learning. 

\medskip
\noindent\textbf{Project page:}
\href{https://benjamint2048.github.io/mu-grpo-project-page/}
{\texttt{benjamint2048.github.io/mu-grpo-project-page/}}
\end{abstract}
\section{Introduction}
\label{sec:introduction}

Reinforcement Learning with Verifiable Rewards (RLVR) has rapidly reshaped the landscape of large language model (LLM) reasoning, driving landmark systems such as OpenAI o1~\citep{openai2024openaio1card} and DeepSeek R1~\citep{Guo_2025}. A dominant narrative attributes the success of these RL approaches, most notably Group Relative Policy Optimization (GRPO)~\citep{schulman2017proximalpolicyoptimizationalgorithms, shao2024deepseekmathpushinglimitsmathematical, yu2026dapo}, to their on-policy design, where rollouts are freshly generated and closely aligned with the current policy. This freshness is widely viewed as important for stable credit assignment and iterative self-improvement.

On closer inspection, however, standard GRPO implementations are almost never strictly on-policy. The off-policyness arises from multiple sources. One primary source is data staleness: most RL training frameworks~\citep{sheng2025hybridflow, slime_github, noukhovitch2025asynchronous, zhong2025streamrl, he2025history} use rollouts generated by policies from several updates earlier, resulting in a \emph{near} on-policy regime that trades off rollout freshness for efficiency~\citep{openr1}. Throughout this paper, we refer to this commonly used low-staleness, near-on-policy implementation as \emph{standard GRPO}. Another source of off-policyness stems from training–inference mismatch. In practice, rollouts are typically generated using high-throughput inference backends like vLLM~\citep{kwon2023efficient}, while policy optimization is performed using sharded training engines like FSDP~\citep{zhao2023pytorchfsdpexperiencesscaling}. This introduces numerical discrepancies between rollout generation and optimization. Empirically, however, LLM RL has proven tolerant to such off-policyness when appropriate stabilization techniques are applied~\citep{liu-li-2025-rl-collapse,qi2025defeating}.

Taken together, these observations raise a natural question: \textit{How far can we increase rollout staleness in GRPO without compromising learning stability or reasoning performance?}

This question is also central to system efficiency. Standard GRPO implementations repeatedly alternate between rollout generation and policy optimization, requiring many weight synchronizations between inference and training engines. To make such switching practical, systems often keep training state resident during rollout generation, reducing memory available for high-throughput inference\citep{sheng2025hybridflow, slime_github}. Increasing rollout staleness could amortize rollout generation over many more policy updates and thereby reduce these overheads. While prior work has explored limited staleness, these approaches do not push staleness far enough to fully decouple rollout generation from optimization~\cite{zheng2026prosperity,xi2026bapo}.

\begin{wrapfigure}{r}{0.57\linewidth}
\centering
\includegraphics[width=\linewidth]{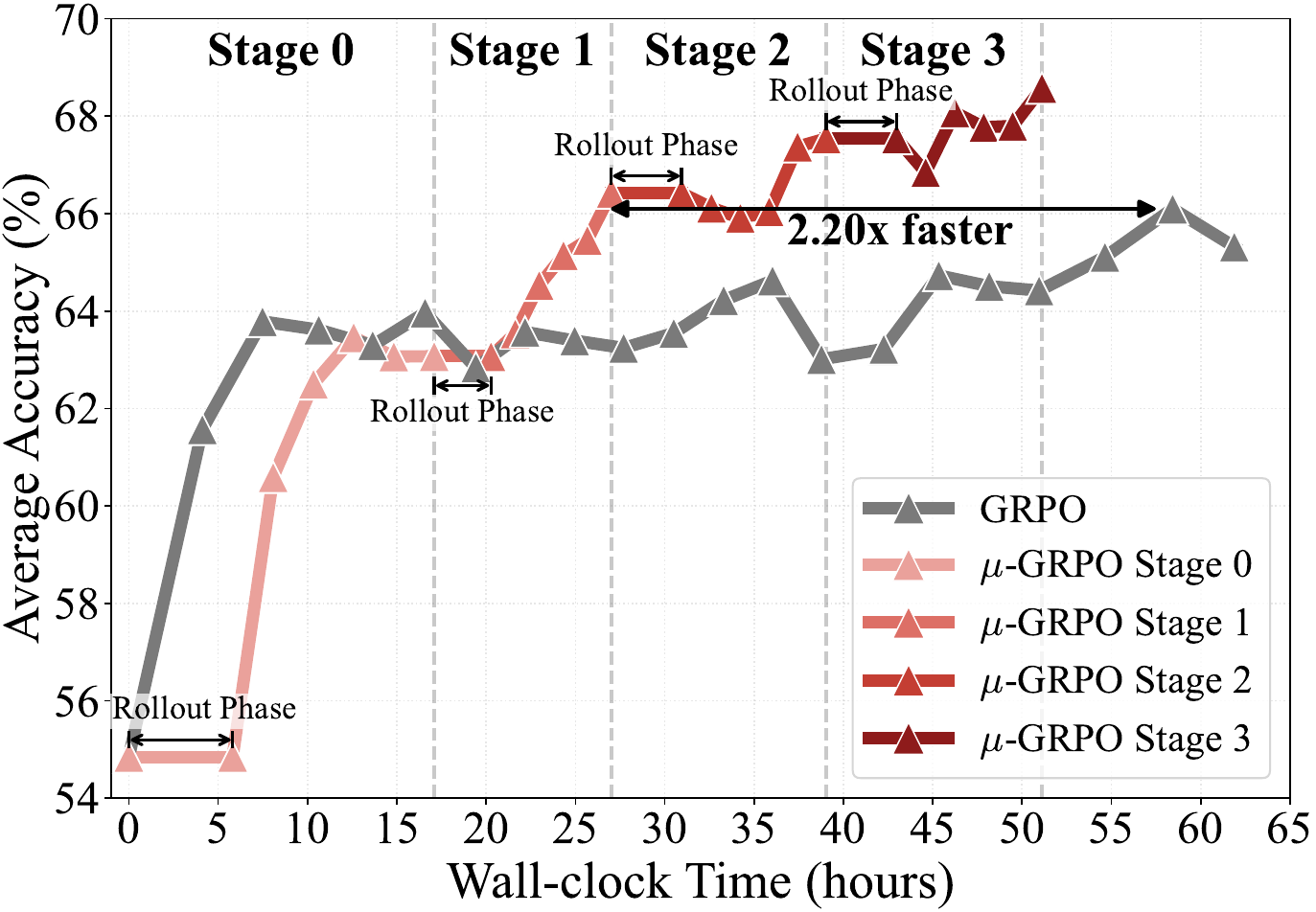}
\vspace{-15pt}
\caption{\textbf{Comparing GRPO and \boldmath \ourmethod.} Average accuracy across five math benchmarks over wall-clock time. \ourmethod{} uses four large rollout--optimization stages, inducing high rollout staleness while reducing rollout--training switching overhead. It reaches GRPO's performance with a 2.2$\times$ wall-clock speedup on DeepSeek-7B~\citep{shao2024deepseekmathpushinglimitsmathematical}. Both methods are trained for 4096 updates.}
\label{fig:teaser}
\vspace{-14pt}
\end{wrapfigure}

In this paper, we show that GRPO-style algorithms can tolerate substantially larger rollout staleness than commonly assumed, but only if the failure mode of high-staleness optimization is handled explicitly. Our diagnosis reveals that stale rollouts retain useful learning signal: relaxed clipping recovers strong early learning under large staleness, but later introduces a localized instability rather than generic off-policy collapse. We trace it to negative-advantage trajectories that cross an off-support boundary: once a prefix becomes poorly supported by the current policy, token-level GRPO can still apply substantial updates to later suffix tokens. This prefix-support mismatch explains why token-level clipping alone fails under large staleness.

Motivated by this diagnosis, we propose \textbf{\boldmath{$\mu$}-GRPO}, a staged GRPO framework for high-staleness training. Instead of frequent small generation--optimization cycles, $\mu$-GRPO uses a few large stages that execute rollout generation and policy optimization sequentially: each stage first freezes the current policy to generate a large static rollout dataset, and then consumes this dataset over many policy updates using stored behavior log-probabilities as anchors, resulting in high rollout staleness. To stabilize optimization under extreme staleness, \ourmethod{} combines relaxed clipping with negative-advantage veto, which filters destabilizing post-trigger suffix updates in negative-advantage responses while preserving useful stale-rollout gradients.

 Empirically, $\mu$-GRPO converts GRPO's staleness tolerance into substantial wall-clock speedups. As illustrated in \cref{fig:teaser}, $\mu$-GRPO completes training in a few stages (e.g., four) and exceeds the performance of the standard GRPO while achieving a 2.2$\times$ wall-clock speedup on DeepSeek-7B\footnote{DeepSeek-7B denotes the Qwen2.5-Math-7B model distilled from DeepSeek-R1. \href{https://huggingface.co/deepseek-ai/DeepSeek-R1-Distill-Qwen-7B}{link to Hugging Face models}}~\cite{shao2024deepseekmathpushinglimitsmathematical}. Across five language models and a diverse set of math reasoning benchmarks, \ourmethod \ matches or improves average accuracy of standard GRPO on four out of five models. At the same time, it reduces rollout generation time by 1.82$\times$ and overall wall-clock training time by 1.53$\times$ on average. These results show that high-staleness GRPO is not only viable, but can improve the performance--efficiency trade-off of scalable LLM reinforcement learning.
\section{Preliminaries: GRPO and Rollout Staleness}
\label{sec:preliminaries}
We briefly review Group Relative Policy Optimization (GRPO)~\citep{shao2024deepseekmathpushinglimitsmathematical} and define rollout staleness. Let $\beta$ denote the behavior policy that generates rollouts and $\pi_\theta$ the policy being optimized.

\paragraph{GRPO objective.} For each prompt $x\sim\mathcal D$, the behavior policy $\beta$ samples a group of $G$ responses $\{y_i\}_{i=1}^G$, where $y_i=(a_{i,1},\ldots,a_{i,T_i})$. Each response receives a scalar reward $R_i$. GRPO estimates advantages by normalizing rewards within the group:
\begin{equation}
    A_i
    =
    \frac{
        R_i - \operatorname{mean}(\{R_1,\ldots,R_G\})
    }{
        \operatorname{std}(\{R_1,\ldots,R_G\})
    } .
    \label{eq:grpo-adv}
\end{equation}
The same response-level advantage $A_i$ is assigned to all tokens in response $y_i$.

For token $a_{i,t}$, with prefix state $s_{i,t}=(x,a_{i,<t})$, define the token-level importance ratio
$
    \rho_{i,t}(\theta)
    =\pi_\theta(a_{i,t}\mid s_{i,t})/\beta(a_{i,t}\mid s_{i,t}).
    \label{eq:token-ratio-prelim}
$ GRPO optimizes
\begin{equation}
\begin{aligned}
    \mathcal J_{\mathrm{GRPO}}(\theta)
    =
    \mathbb E_{x\sim\mathcal D,\; \{y_i\}\sim\beta}
    \left[
    \frac{1}{G}\sum_{i=1}^G
    \frac{1}{T_i}\sum_{t=1}^{T_i}
    \min\left(
        \rho_{i,t}(\theta) A_i,\,
        \operatorname{clip}
        \big(\rho_{i,t}(\theta),1-\epsilon,1+\epsilon\big) A_i
    \right)
    \right].
\end{aligned}
\label{eq:grpo-obj}
\end{equation}
The implemented training loss is $-\mathcal J_{\mathrm{GRPO}}$. A KL term $-\lambda_{\mathrm{KL}} D_{\mathrm{KL}}\!\left(\pi_\theta \,\|\, \pi_{\mathrm{ref}} \right)$ is optionally applied.

\paragraph{Rollout staleness $\mu$.} GRPO training alternates between rollout generation and policy optimization. At the beginning of each cycle, the current policy is frozen as the behavior policy $\beta$, which generates a fixed rollout batch used for several policy updates before the next refresh.

Let $B_{\mathrm{train}}$ denote the number of prompt groups generated in one rollout phase, where each prompt group contains $G$ sampled responses. Let $B_{\mathrm{mini}}$ denote the number of prompt groups consumed by one policy update. Therefore, a single rollout generation phase supports $\mu = {B_{\rm train}}/{B_{\rm mini}}$ model updates, which we refer to as the \textit{rollout staleness}.

During these $\mu$ updates, $\theta$ changes while the behavior policy $\beta$ and the stored rollout log-probabilities remain fixed. Thus later updates within the same cycle optimize against increasingly stale data. A strictly on-policy algorithm corresponds to $\mu=1$, where every update uses freshly generated rollouts. 

GRPO is therefore a \emph{near on-policy} algorithm, with $\mu$ typically set to a small value (e.g., $\mu\,$=$\,4$~\citep{zheng2026act,wang2026reinforcement,yu2026dapo,chen2025minimax,zheng2025group}) to control policy drift under PPO-style clipped updates. While it is generally understood that a large $\mu$ degrades performance,\textbf{ we show in this work that substantially larger values of \boldmath $\mu$ are feasible}, achieving comparable or improved performance while accelerating training.

Throughout the paper, one \emph{model update} refers to one optimizer update on one minibatch, and \emph{batch size} refers to $B_{\mathrm{mini}}$, the number of prompt groups for one optimizer update, unless stated otherwise.
\section{Diagnosing GRPO's Staleness Tolerance}
\label{sec:diagnosis}

We pursue large-$\mu$ training because it offers a compelling promise: accelerating training through extensive data reuse, turning rollout staleness from a constraint into a source of efficiency. Doing so, however, requires understanding when stale rollouts remain useful and when they become destabilizing. Rather than treating high staleness as uniformly harmful, we show that stale rollouts contain exploitable learning signal whose failure mode is localized to a specific class of off-support negative updates. This section identifies why naive high-staleness GRPO breaks (\cref{subsec:clipping_dilemma}), then identify low-ratio negative-advantage
tokens as collapse triggers (\cref{subsec:negative_drift}), and finally localize
the destabilizing updates to the post-trigger suffix
(\cref{subsec:localizing_harmful_updates}).

\subsection{The Clipping Dilemma under High Staleness}
\label{subsec:clipping_dilemma}

\begin{wrapfigure}{r}{0.65\textwidth}
\vspace{-1.0em}
\centering
\includegraphics[width=0.32\textwidth]{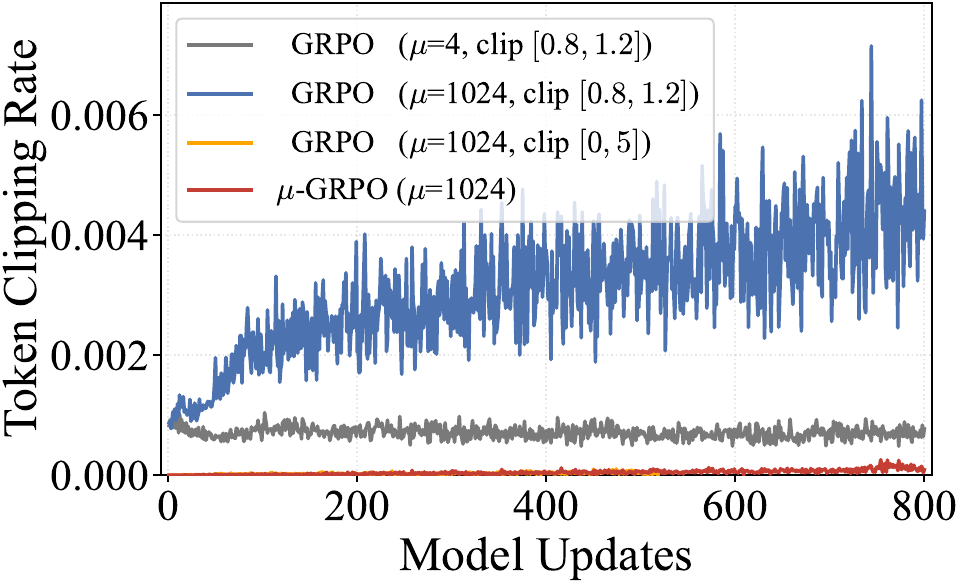}
\hfill
\includegraphics[width=0.32\textwidth]{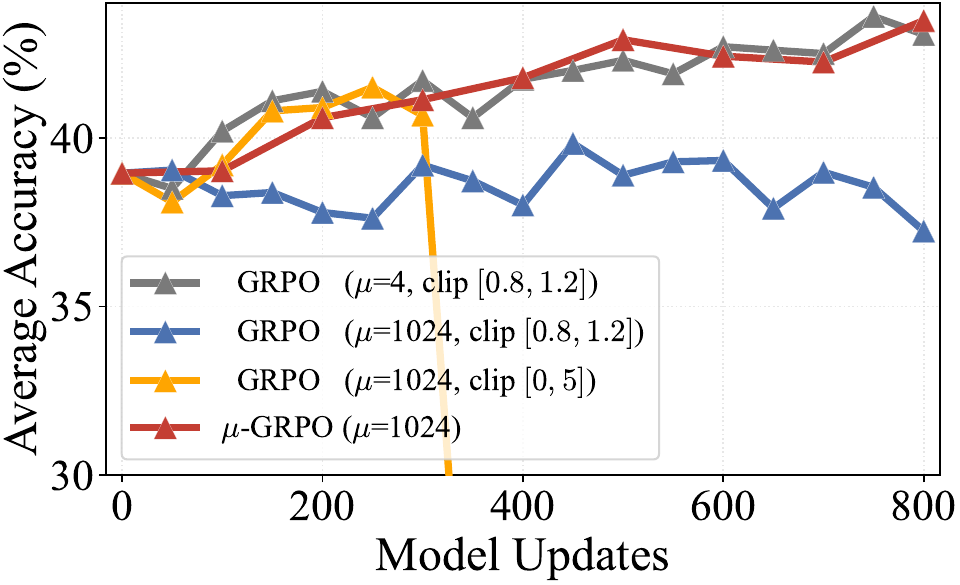}
\vspace{-1.5em}
\caption{\textbf{Clipping creates a high-staleness dilemma.}
At $\mu=1024$, $[0.8,1.2]$ bound clips many more tokens and plateaus at lower accuracy ({\color{plotblue}blue}), while relaxed clipping recovers early gains but later collapses ({\color{plotyellow}yellow}).
Results are on Qwen2.5-Math-7B.}
\label{fig:grpo_clipping_and_performance}
\vspace{-1em}
\end{wrapfigure}

Under low staleness, the behavior policy $\beta$ and current policy $\pi_\theta$ remain close, so the standard $[0.8,1.2]$ interval clips only outlying updates. Under high staleness, however, rollouts sampled from the same $\beta$ are reused for many updates, causing the current policy $\pi_\theta$ to drift away from the rollout distribution. We study this clipping dilemma at a high staleness $\mu=1024$ in Figure~\ref{fig:grpo_clipping_and_performance}. With the standard clipping bound $[0.8,1.2]$, an increasing fraction of rollout tokens fall outside the interval and are clipped, suppressing their gradient contribution (Figure~\ref{fig:grpo_clipping_and_performance}, left). As a result, the run plateaus at a lower accuracy (Figure~\ref{fig:grpo_clipping_and_performance}, right). In contrast, relaxing the clipping range to $[0,5]$ keeps the clipping ratio low and recovers strong early learning, indicating that stale rollouts still contain useful gradient signal. However, this signal is not continuously safe to use: after the initial improvement phase, the relaxed-clipping run collapses mid-training.
These results show that high-staleness GRPO is limited not by the absence of useful signal in stale rollouts, but by the difficulty of exposing that signal while suppressing destabilizing updates as $\pi_\theta$ drifts away from the behavior policy $\beta$.

\subsection{Negative-Advantage Drift as a Collapse Signal}
\label{subsec:negative_drift}

The clipping dilemma in Figure~\ref{fig:grpo_clipping_and_performance} shows that neither extreme is satisfactory under high staleness. Tight clipping is stable but clips away much of the stale rollout signal, limiting learning. Relaxed clipping exposes this signal and enables strong early improvement, but eventually collapses. We therefore ask which part of the relaxed update becomes dangerous.

We focus on negative-advantage tokens. For GRPO, a negative advantage means that the response scored worse than other responses in the same group, so the update decreases the probability of the tokens that produced it. Under high staleness, many successive updates use rollouts from the same frozen behavior policy $\beta$. With relaxed clipping, the policy can therefore repeatedly suppress negative-advantage actions sampled from this fixed distribution.

This repeated suppression creates a simple diagnostic. The importance ratio $\rho_t=\pi_\theta(a_t\mid s_t)/\beta(a_t\mid s_t)$ compares how likely a token is under the current policy and under the behavior policy that generated it. A small ratio on negative-advantage tokens means that actions once plausible under $\beta$ have become much less likely under $\pi_\theta$. The blue and gray lines in Figure~\ref{fig:nav_explanation}  show that this is exactly what happens in collapsed runs: $\mathbb{E}[\rho_t\mid A_t<0]$ drops sharply, while stable runs keep it close to
one. Collapse is therefore preceded by the over-suppression of low-reward behavior-sampled responses.

Formally, for a response $y$\,$=$\,$(a_1,\ldots,a_T)$, we omit the response index $i$. With the relaxed clipping range $[0,5]$, the effective gradient for a negative-advantage token is
$
    \nabla_\theta \ell_t(\theta)
    =
    |A_t|\,\rho_t(\theta)
    \nabla_\theta \log \pi_\theta(a_t\,|\, s_t).
$
Thus, relaxed clipping allows negative-advantage updates to keep suppressing behavior-sampled actions over a much wider ratio range.

This motivates using low-ratio negative-advantage tokens as collapse triggers. Prior work on training--inference mismatch similarly captures low-ratio events to mask out unstable updates~\cite{liu-li-2025-rl-collapse}. Here, we treat such events as boundaries where useful stale learning begins to turn into destructive policy drift, and then ask which subsequent updates must be masked to prevent collapse.

\begin{figure*}[!t]
\centering
\includegraphics[width=0.49\textwidth]{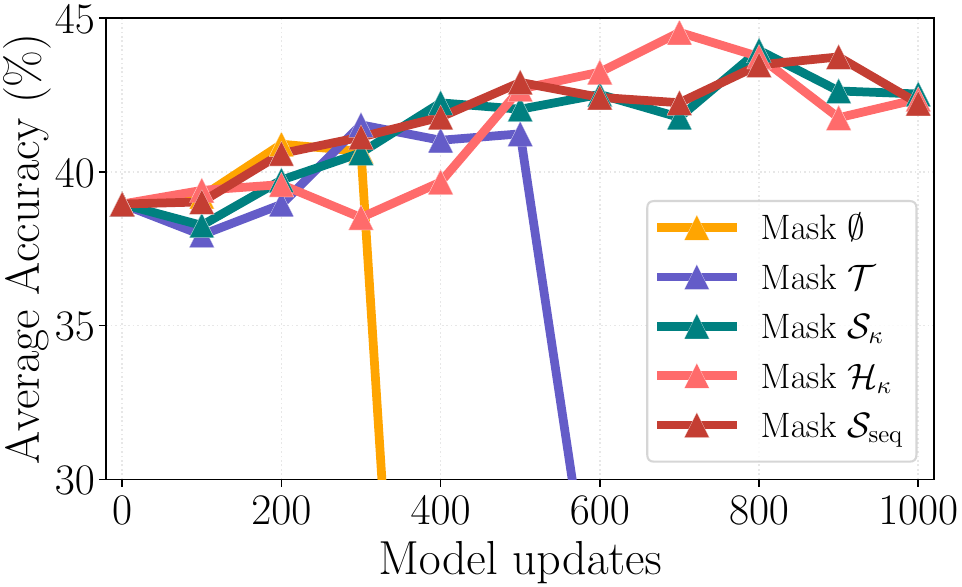}
\hfill
\includegraphics[width=0.49\textwidth]{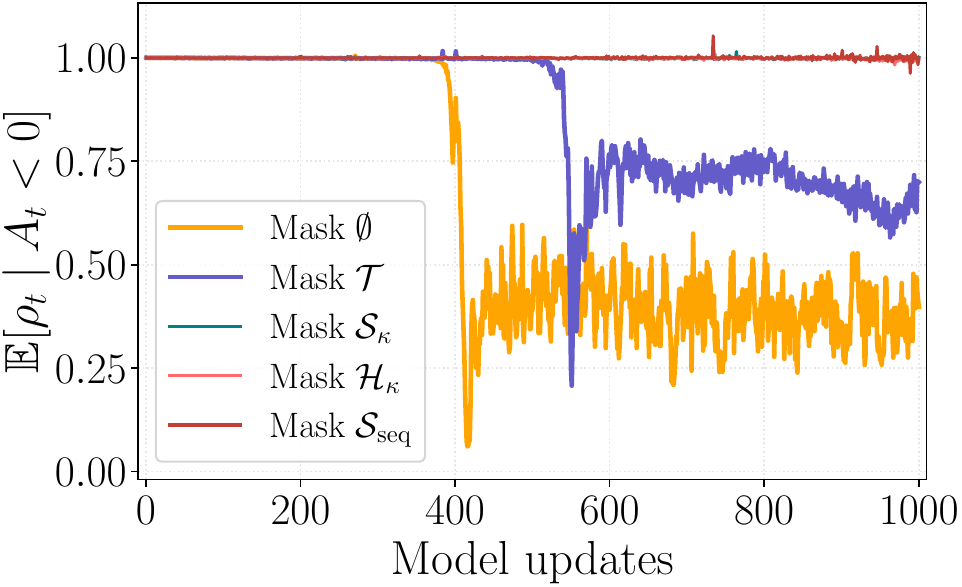}
\caption{\textbf{Localizing the harmful updates.} Under relaxed clipping $[0,5]$, masking only trigger tokens $\mathcal T$ still collapses, while masking the non-trigger suffix $\mathcal H_\kappa$ or broader scopes keeps both accuracy and negative-advantage importance ratios stable. Results are on Qwen2.5-Math-7B.}
\label{fig:nav_explanation}
\end{figure*}

\subsection{Post-Trigger Suffix Token as the Source of Collapse}
\label{subsec:localizing_harmful_updates}

Now that low-ratio negative-advantage tokens are an early warning signal for collapse, we ask what they are warning us about. Are these extremely low-ratio tokens themselves the harmful updates, or do they mark a boundary after which the rest of the response becomes unsafe to learn from?

We define this boundary. A token is a \textit{trigger token} if it has negative advantage and its importance ratio falls below a small threshold: $A_t<0\text{ and }\rho_t(\theta)<\tau_c ,$ where $\tau_c$ is set to a very small value, such as $10^{-4}$. For a response $y$, the \textit{trigger boundary} is the first trigger-token position
\begin{equation}
    \kappa(y)=\min\{t: A_t<0,\ \rho_t(\theta)<\tau_c\},
\end{equation} 
with $\kappa(y)=\infty$ if no such token exists.

With this defined, we compare five masking scopes, all applied only to negative-advantage responses:
\begin{itemize}
[leftmargin=*,noitemsep,topsep=0pt]
\vspace{-5pt}
    \item No-Mask Baseline: Keeps the full, unaltered response.
    \item Trigger-token Mask $\mathcal{T}(y)$: Removes all trigger tokens.
    \item Suffix Mask $\mathcal{S}_{\kappa}(y)$: Removes all tokens from the trigger boundary (exclusive) to the end.
    \item Non-trigger Suffix Mask $\mathcal{H}_{\kappa}(y)= \mathcal S_{\kappa}(y) \setminus \mathcal T(y)$: Masks the suffix tokens excluding trigger tokens.
    \item Sequence Mask $\mathcal{S}_{\mathrm{seq}}(y)$: Masks the whole response once any trigger appears.
\end{itemize}

Figure~\ref{fig:nav_explanation} shows the results. Under relaxed clipping $[0,5]$, both no-mask baseline and trigger-token mask $\mathcal T(y)$ collapse. Thus, low-ratio tokens are primarily  \textit{triggers}, rather than \textit{the dominant harmful updates}. By contrast, non-trigger suffix mask $\mathcal H_{\kappa}(y)$ stabilizes training, and the broader masks $\mathcal S_{\kappa}(y)$ and $\mathcal S_{\mathrm{seq}}(y)$ are also stable because they include the same post-boundary suffix.

The takeaway is that \textit{collapse is driven by post-trigger suffix tokens with non-tiny local ratios, not by the trigger tokens alone}. Once a response hits a low-ratio negative-advantage token, later tokens are evaluated under a prefix that the current policy has already made unlikely, so their updates can become unsafe even when their own ratios are not extremely small. This explains why token-level clipping fails under high staleness: it suppresses updates based only on local ratios, leaving post-boundary suffix tokens active whenever their own ratios are not extremely small.

\paragraph{Why Post-Trigger Tokens Become Harmful.}
\label{paragraph:post_boundary_mechanism}
The ablation above suggests that the low-ratio token is not the dominant source of collapse. Instead, it marks the point after which tokens are conditioned on prefixes that the current policy $\pi_\theta$ is unlikely to generate. Define the prefix occupancy ratio as
\begin{equation}
    R_{t-1}(\theta)
    =
    \prod_{j=1}^{t-1}\rho_j(\theta)
    =
    \frac{\pi_\theta(a_{<t}\mid x)}
         {\beta(a_{<t}\mid x)} .
    \label{eq:prefix_ratio}
\end{equation}
Once a trigger appears at position $\kappa$, its tiny ratio enters every later $R_{t-1}$, so teacher-forcing training continues to update suffix tokens behind a prefix with low current-policy support.

This is the correction that token-level GRPO omits. It weights each suffix token by only its local ratio $\rho_t(\theta)$, without accounting for the probability of reaching the prefix on which that token is conditioned. Therefore, post-boundary suffix tokens can still receive large updates when their own ratios are non-tiny, even though the prefixes needed to reach them are already unlikely.

This explains why $\mathcal H_\kappa(y)$, rather than $\mathcal T(y)$, is the critical harmful set. Trigger tokens in $\mathcal T(y)$ have tiny local ratios, so their direct negative-gradient coefficients are already small. Tokens in $\mathcal H_\kappa(y)$, however, inherit low prefix support while retaining non-tiny local ratios, so token-level GRPO keeps their gradients active. As a result, masking $\mathcal T(y)$ still collapses, whereas masking $\mathcal H_\kappa(y)$ prevents collapse. Appendix~\ref{app:prefix-divergence} formalizes this view. 
\section[Method: mu-GRPO]{Method: \texorpdfstring{\boldmath$\mu$-GRPO}{mu-GRPO}}
\label{sec:method}

Section~\ref{sec:diagnosis} shows that high-staleness rollouts can retain useful learning signal once their localized instability is controlled. This staleness tolerance allows us to safely pursue high-staleness training, where a large static rollout dataset supports many optimizer updates. As a result, we can reduce frequent rollout refreshes and translate algorithmic tolerance into system efficiency.

\paragraph{\boldmath \ourmethod{}.} We propose a staged training framework that separates rollout generation and policy optimization into large-$\mu$ phases. Instead of tightly interleaving generation and optimization as in standard GRPO, \ourmethod{} generates rollouts in bulk and then performs many policy updates on the resulting static dataset. Each stage consists of two phases:

\paragraph{Phase 1: High-Throughput Rollout Generation.}
\label{paragraph:phase1}
At the beginning of each stage $k$, we freeze the current policy as the rollout policy $\beta_k$ and use it only for rollout generation. Specifically, for the first stage, we initialize the rollout policy from the base model, $\beta_0 \leftarrow \pi_{\text{base}}$; for later stages, $\beta_k$ is obtained by freezing the policy after the optimization phase of stage $k-1$. Given the stage prompt set $\mathcal{X}_k=\{x_i\}_{i=1}^{B_{\mathrm{train}}}$, the rollout policy $\beta_k$ samples a response group $\mathcal Y_i=\{y_{i,g}\}_{g=1}^G$ for each prompt $x_i$. For each generated response $y=(a_1,\ldots,a_T)$ to prompt $x$, we compute its group-normalized advantage $A$ as in Eq.~\ref{eq:grpo-adv} and store tokenwise behavior log-probabilities $b_{1:T}$ with $b_t=\log\beta_k(a_t\mid x,a_{<t})$. The resulting static rollout dataset is
\[
\mathcal D_k=\{(x_i, y, A, b_{1:T}) : x_i\in\mathcal X_k,\; y\in\mathcal Y_i\}.
\]
The frozen rollout engine emits behavior log-probabilities during decoding, avoiding an additional behavior-policy forward pass. Once rollout generation is complete, the rollout engine can be released from memory, and $\mathcal D_k$ is passed to the optimization phase.

\paragraph{Phase 2: Policy Optimization with Large \boldmath $\mu$.}
\label{paragraph:phase2}

The policy is then optimized on the static rollout dataset $\mathcal D_k$ for $\mu=B_{\mathrm{train}}/B_{\mathrm{mini}}$ updates before the next rollout stage. By making $B_{\mathrm{train}}$ large, \ourmethod{} operates with $\mu$ on the order of thousands, reusing the same behavior policy over many updates. During optimization, $\beta_k$ is fixed and its behavior log-probabilities are already stored, so the rollout engine need not remain resident. For a token $a_t$ in response $y$ to prompt $x$, with stored behavior log-probability $b_t$, we compute the importance ratio as
\[
    \rho_t(\theta)
    =
    \exp\left(
        \log \pi_\theta(a_t\mid x,a_{<t}) - b_t
    \right).
\]
Motivated by the analysis in \cref{sec:diagnosis}, \ourmethod{} applies \textit{negative-advantage veto} before the GRPO update. For each negative-advantage response $y$, the veto rule checks whether any token satisfies $\rho_t(\theta)<\tau_c$. We consider three veto scopes that contain the diagnosed non-trigger suffix $\mathcal H_\kappa$: $\mathcal H_\kappa$, $\mathcal S_\kappa$, and $\mathcal S_{\mathrm{seq}}$. We use $\mathcal S_{\mathrm{seq}}$ by default for simplicity and robustness, and ablate the scope choice in Appendix~\ref{app:nav_scope_ablation}. Under this default, once a trigger is detected, the entire response is masked. Thus, in the objective below, $m_{t}(\theta)$ is zero
for all tokens in a triggered negative-advantage response and equals one otherwise. The mask is recomputed at each update and is not differentiated through. 

With this mask, we optimize a relaxed clipped GRPO objective with clipping range $[0,5]$:
\[
J_{\mu\text{-GRPO}}(\theta)
=
\mathbb{E}_{\mathcal B\sim\mathcal D_k}
\left[
\frac{1}{|\mathcal B|}
\sum_{(x,y,A,b_{1:T})\in\mathcal B}
\frac{1}{T}
\sum_{t=1}^{T}
m_t(\theta)
\min\left(
\rho_t(\theta)A,
\operatorname{clip}(\rho_t(\theta),0,5)A
\right)
\right].
\]

\begin{figure*}[!h]
\vspace{-1.2em}
    \centering
    \includegraphics[width=0.95\textwidth]{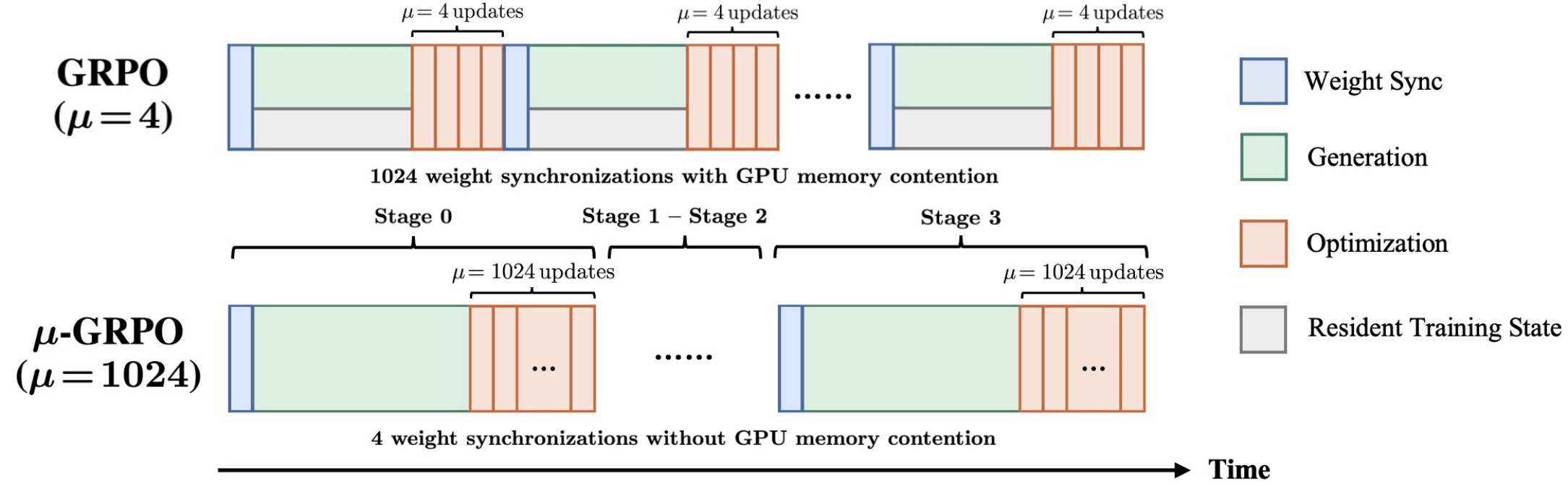}
    \caption{
\textbf{System-level execution of standard GRPO and \boldmath\ourmethod.} Over 4096 model updates, standard GRPO with $\mu\!=\!4$ requires 1024 rollout refreshes and weight synchronizations, while \ourmethod{} with $\mu\!=\!1024$ requires only four. Large-stage rollout generation reduces synchronization overhead and avoids GPU memory contention during generation.
    }
    \label{fig:framework}
\vspace{-0.9em}
\end{figure*} 
\paragraph{Multi-Stage \boldmath $\mu$-GRPO.}
\label{sec:multistage}
\setlength{\columnsep}{10pt}
\begin{wrapfigure}{r}{0.45\linewidth}
\vspace{-1.35em}
\centering
\includegraphics[width=\linewidth]{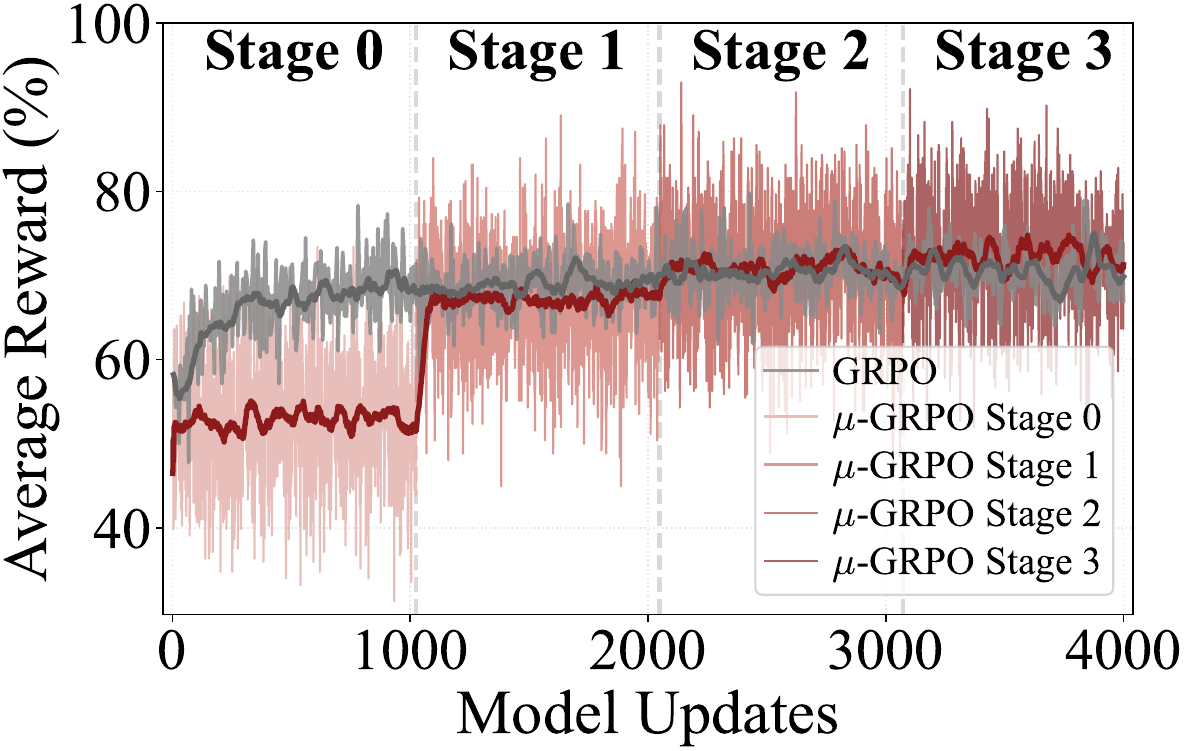}
\vspace{-1.5em}
\caption{
\textbf{Training-set reward dynamics} on DeepSeek-7B. Bold lines are smoothed.
}
\label{fig:reward}
\vspace{-12pt}
\end{wrapfigure}
Both standard GRPO and $\mu$-GRPO can be viewed as repeated generation--optimization cycles, as illustrated in \cref{fig:framework}. The key difference is the scale of each cycle: standard GRPO refreshes the behavior policy after small rollout batches, keeping $\mu$ low, whereas $\mu$-GRPO generates a large static rollout dataset and performs many updates before the next rollout stage. The two-phase procedure above defines one such large-$\mu$ stage. After stage $k$ consumes $\mathcal{D}_k$, the updated policy is frozen as the rollout policy for stage $k+1$, and a fresh dataset $\mathcal{D}_{k+1}$ is generated. This design separates two timescales: within each stage, the rollout policy stays fixed for many updates to amortize generation; across stages, it is refreshed only a few times to improve rollout quality. As shown in \cref{fig:reward,fig:teaser}, this sparse refresh schedule improves training-set reward across stages and translates the large-$\mu$ efficiency gain into stable evaluation performance.

\paragraph{System-Level Efficiency.}
\label{sec:method_efficiency} 
Figure~\ref{fig:framework} compares the execution schedules of standard GRPO and \ourmethod{} over 4096 model updates. Standard GRPO with $\mu=4$ requires 1024 rollout refreshes and weight synchronizations between a high-throughput inference engine, such as vLLM~\citep{kwon2023efficient} or SGLang~\citep{zheng2024sglang}, and a sharded training engine, such as FSDP~\citep{zhao2023pytorchfsdpexperiencesscaling} or Megatron-LM~\citep{DBLP:journals/corr/abs-1909-08053}. Each refresh can incur sharded-parameter materialization, transfer to the rollout engine, and inference-layout conversion. Frequent alternation also creates GPU memory contention: to switch quickly between generation and optimization, systems often keep model parameters and optimizer states resident during rollout. This reduces memory available for high-throughput inference.

\ourmethod{} reduces both costs by operating at large rollout staleness ($\mu\!=\!1024$): Over 4096 model updates, it performs only four rollout stages and weight synchronizations. Within each stage, rollout generation is executed as a standalone phase: the frozen rollout policy generates a large static dataset, after which the rollout engine can be released and the trainer consumes the data using stored behavior log-probability anchors. With only a few generation--optimization switches, training state need not remain co-resident during rollout, leaving more device memory for larger rollout batches and higher decoding parallelism. Large-$\mu$ training therefore turns GRPO's staleness tolerance into lower synchronization overhead, lower memory contention, and higher rollout throughput.
\section{Experiments}
\label{sec:experiment}

\paragraph{Models \& Datasets.} We evaluate \ourmethod\ on five language models: Llama-3.2-3B-Instruct~\citep{grattafiori2024llama3herdmodels}, Qwen2.5-Math-7B~\citep{yang2024qwen25mathtechnicalreportmathematical}, Qwen3-Base-1.7B and 8B~\citep{yang2025qwen3technicalreport}, and DeepSeek-7B~\citep{shao2024deepseekmathpushinglimitsmathematical}. We use a context length of 4k for Qwen2.5-Math-7B, which is the maximum supported by this model, and 8k for all other models. For training, we adopt the DeepScaleR~\citep{sun2026improving} mathematics dataset.

\paragraph{Training \& Evaluation.} We implement all methods in verl~\citep{sheng2025hybridflow}, with  vLLM~\citep{kwon2023efficient} for rollout generation and FSDP~\citep{zhao2023pytorchfsdpexperiencesscaling} for actor optimization. Each configuration is trained for 4096 model update steps with batch size 32, giving the same rollout-sample budget. We evaluate GRPO under both low- and high-staleness settings ($\mu=4$ and $\mu=1024$), and set $\mu=1024$ for \ourmethod{} and M2PO. High-staleness GRPO uses the same two-phase large-stage rollout pipeline as \ourmethod{} but optimizes the GRPO objective with default clipping range and no negative-advantage veto. Thus, \ourmethod{} performs four rollout generation stages, each followed by 1024 updates, as illustrated in \cref{fig:teaser}. For GRPO, we set $\epsilon=0.2$ and omit the KL loss following prior work~\citep{yu2026dapo,zheng2026prosperity}. For M2PO, we use the recommended second-moment threshold of $0.04$.

We use a fully aligned wall-clock measurement protocol on identical hardware configurations with $4\times$H200 GPUs, including both rollout generation and policy optimization. For evaluation, we use five widely adopted math reasoning benchmarks: AMC23/24~\citep{aops_amc}, AIME24/25~\citep{aops_aime}, and Math500~\citep{hendrycks2021measuringmathematicalproblemsolving,lightman2024let}. Models are evaluated every 200 model updates, and we report the checkpoint with the best average performance across all benchmarks. Additional experimental details are provided in Appendix~\ref{app:details}.

\begin{table*}[tbh]
\tabcolsep 2.3pt
\caption{{Performance (\%) and efficiency comparison} across five math reasoning benchmarks and five models. \textbf{Bold} marks the best Avg.; \underline{underline} marks the fastest $T_{\mathrm{total}}$ among methods no worse than standard GRPO in Avg. Overall, $\mu$-GRPO achieves the strongest performance--efficiency trade-off.}
\label{tab:performance_comparison}
\centering
\small
\begin{sc}
\begin{tabular}{c|c|ccccc|c|cc}
\toprule
Methods & $\mu$
& AMC23 & AMC24
& AIME24 & AIME25
& Math500
& Avg. $\uparrow$
& $T_\mathrm{rollout}$\,\textnormal{(h)} $\downarrow$
& $T_\mathrm{total}$\,\textnormal{(h)} $\downarrow$ \\
\midrule\midrule

\multicolumn{10}{c}{Llama-3.2-3B-Instruct} \\
\midrule
GRPO & 4
& 25.0 & 20.6 & 12.3 & 0.2 & 55.1 & 22.6 & 15.1 & 22.7 \\
GRPO & 1024
& 21.9 & 12.2 & 9.8 & 0.6 & 46.8 & 18.3 & 13.2 & 19.3 \\
M2PO & 1024
& 30.6 & 15.0 & 10.6 & 0.2 & 50.7 & 21.4 & 15.3 & 21.3 \\
\rowcolor{gray!20}
\ourmethod & 1024
& 29.4 & 17.7 & 14.0 & 0.5 & 52.8 & \textbf{22.9} & 12.7 & \underline{{18.6}} \\
\midrule

\multicolumn{10}{c}{Qwen2.5-Math-7B} \\
\midrule
GRPO & 4
& 68.1 & 42.8 & 29.0 & 15.8 & 81.0 & 47.3 & 8.8 & 23.7 \\
GRPO & 1024
& 61.3 & 36.2 & 23.3 & 9.8 & 78.2 & 41.8 & 3.8 & 17.9 \\
M2PO & 1024
& 66.9 & 40.6 & 29.6 & 14.6 & 82.5 & 46.8 & 6.7 & 20.2 \\
\rowcolor{gray!20}
\ourmethod & 1024
& 63.1 & 47.2 & 31.5 & 16.5 & 81.7 & \textbf{48.0} & 4.7 & \underline{{18.1}} \\
\midrule

\multicolumn{10}{c}{Qwen3-Base-1.7B} \\
\midrule
GRPO & 4
& 42.5 & 27.2 & 7.5 & 3.8 & 66.6 & 29.5 & 11.2 & 20.5 \\
GRPO & 1024
& 32.5 & 19.3 & 7.7 & 4.5 & 63.5 & 25.5 & 6.5 & 13.3 \\
M2PO & 1024
& 35.0 & 24.4 & 10.0 & 4.4 & 66.6 & 28.1 & 12.9 & 24.2 \\
\rowcolor{gray!20}
\ourmethod & 1024
& 40.8 & 24.9 & 7.5 & 6.5 & 68.7 & \textbf{29.7} & 5.2 & \underline{{12.4}} \\
\midrule

\multicolumn{10}{c}{Qwen3-Base-8B} \\
\midrule
GRPO & 4
& 68.8 & 53.3 & 23.5 & 19.4 & 86.3 & \textbf{50.3} & 21.3 & \underline{{58.2}} \\
GRPO & 1024
& 59.6 & 43.2 & 17.6 & 11.2 & 74.6 & 41.2 & 7.4 & 23.6 \\
M2PO & 1024
& 65.6 & 53.9 & 24.2 & 19.6 & 87.1 & 50.1 & 23.2 & 49.7 \\
\rowcolor{gray!20}
\ourmethod & 1024
& 69.4 & 51.1 & 21.5 & 17.4 & 81.8 & 48.2 & 7.9 & {25.9} \\
\midrule

\multicolumn{10}{c}{DeepSeek-7B} \\
\midrule
GRPO & 4
& 91.4 & 66.7 & 46.8 & 33.2 & 92.4 & 66.1 & 26.0 & 61.9 \\
GRPO & 1024
& 82.6 & 63.5 & 39.5 & 26.8 & 88.7 & 60.2 & {22.3} & 54.5 \\
M2PO & 1024
& 90.6 & 69.4 & 50.8 & 35.0 & 93.0 & 67.8 & {27.9} & 68.3 \\
\rowcolor{gray!20}
\ourmethod & 1024
& 93.1 & 68.9 & 50.2 & 37.9 & 92.8 & \textbf{68.6} & 21.4 & \underline{{51.3}} \\
\bottomrule
\end{tabular}
\end{sc}
\end{table*}
\subsection{Main Results}
\label{sec:analysis}
We analyze the results in \cref{tab:performance_comparison} along two dimensions: final reasoning performance and wall-clock efficiency, measured by rollout-generation time $T_{\mathrm{rollout}}$ and end-to-end training time $T_{\mathrm{total}}$.

\textbf{\boldmath $\mu$-GRPO \textit{vs.} High-Staleness GRPO: Recovering performance under the same large-$\mu$ regime.} Naively increasing GRPO to $\mu\!=\!1024$ reduces rollout-generation and wall-clock time, but degrades accuracy. Under the same high-staleness setting, $\mu$-GRPO recovers this performance loss while retaining the efficiency benefits of large-stage rollout execution. It improves average accuracy from 41.8\% to 48.0\% on Qwen2.5-Math-7B and from 60.2\% to 68.6\% on DeepSeek-7B. The gains are visible on harder benchmarks such as AIME24/25. This consistent gap shows that the stabilization mechanisms in $\mu$-GRPO are critical for turning high-staleness data reuse into effective learning. 

\textbf{\boldmath \ourmethod{} vs. Standard GRPO and M2PO: Better performance--efficiency trade-off.} Against standard GRPO ($\mu\!=\!4$), \ourmethod{} matches or improves average accuracy on four out of five models, with the largest gain on DeepSeek-7B (68.6\% vs.\ 66.1\%), while reducing $T_{\mathrm{rollout}}$ by {1.82$\times$} and $T_{\mathrm{total}}$ by {1.53$\times$} on average. Compared to M2PO at $\mu\!=\!1024$, \ourmethod{} achieves competitive accuracy while reducing $T_{\mathrm{rollout}}$ by {1.87$\times$} and $T_{\mathrm{total}}$ by {1.49$\times$}. M2PO confirms that high-staleness rollouts can remain learnable, while \ourmethod{} more directly converts this tolerance into rollout and wall-clock efficiency gains. Overall, \ourmethod{} achieves a stronger performance--efficiency trade-off.

\paragraph{Evaluation Results on Coding Tasks.} As a test beyond math reasoning, we train Qwen2.5-Coder-7B~\citep{hui2024qwen25codertechnicalreport} on the \texttt{code\_contests} dataset~\citep{li2022competition} and evaluate on LiveCodeBench~\citep{jain2024livecodebenchholisticcontaminationfree}, using the same setup as in the main experiments, except for a minibatch size of 16. Starting from 8.2\% accuracy, GRPO and \ourmethod{} reach comparable accuracies of 39.3\% and 38.6\%. However, \ourmethod{} reduces wall-clock training time from 98.9h to 45.7h, yielding a 2.16$\times$ speedup. This suggests that the efficiency benefit of large-$\mu$ training can transfer to coding tasks without sacrificing performance.

\paragraph{Asynchronous Execution.}
\begin{wraptable}{r}{0.56\textwidth}
\vspace{-1.25em}
\centering
\setlength{\tabcolsep}{0.2cm}
\renewcommand{\arraystretch}{0.8}
\footnotesize
\caption{Asynchronous execution on Qwen2.5-Math-7B.}
\vspace{-0.6em}
\label{tab:async}
\begin{tabular}{ccccc}
\toprule
Method & Max Lag & Avg.$\uparrow$ & $T_{\mathrm{total}}$\,(h)$\downarrow$ & Idle Ratio$\,(\%)\downarrow$ \\
\midrule
GRPO & 4 & 47.9 & 22.3 & 24.2 \\
$\mu$-GRPO & 1024 & 47.7 & \textbf{17.4} & \textbf{0.6} \\
\bottomrule
\end{tabular}
\vspace{-1.2em}
\end{wraptable}
Unlike the synchronous staged setting in \cref{fig:framework}, we evaluate GRPO and $\mu$-GRPO in a fully asynchronous setup with rollout workers and trainers running in parallel on separate devices. We use $4\times$H200 GPUs, split evenly between rollout workers and trainers. Rollout samples are streamed through a queue, and workers periodically receive synchronized policy parameters. With periodic synchronization, samples may lag behind the trainer policy; we cap the maximum lag for async $\mu$-GRPO to match the $\mu=1024$ staleness level of the synchronous setting. Async $\mu$-GRPO uses the same infrastructure as async GRPO, but replaces the trainer-side update with relaxed clipping and negative-advantage veto. This tests whether $\mu$-GRPO's staleness tolerance transfers from serial staged training to parallel asynchronous execution. As shown in \cref{tab:async}, async $\mu$-GRPO maintains comparable accuracy while reducing training time from 22.3h to 17.4h; the trainer idle ratio, defined as the fraction of total trainer step time spent waiting for rollout samples, drops from 24.2\% to 0.6\%. 
\subsection{Ablation Study}
\begin{table}[!h]
\centering
\setlength{\tabcolsep}{0.08cm}
\small
\begin{minipage}[t]{.48\linewidth}
\centering
\caption{\textbf{Fixed generation budget.} Varying batch size on a fixed rollout dataset.}
\label{tab:batch-mu-ablation}
\resizebox{\linewidth}{!}{
\begin{tabular}{cccccccc}
\toprule
\scriptsize Batch & \scriptsize $\mu$
& \scriptsize AMC23 & \scriptsize AMC24
& \scriptsize AIME24 & \scriptsize AIME25
& \scriptsize MATH500
& \scriptsize Avg. \\
\midrule
\rowcolor{gray!20} 
32  & 1259 & 61.9 & 40.6 & 24.8 & 13.8 & 80.0 & \textbf{44.2} \\
64  & 629  & 63.1 & 39.4 & 23.3 & 13.3 & 79.7 & 43.8 \\
128 & 314  & 61.3 & 36.1 & 21.5 & 13.5 & 80.7 & 42.6 \\
256 & 157  & 62.5 & 32.8 & 23.1 & 11.9 & 79.3 & 41.9 \\
\bottomrule
\end{tabular}}
\end{minipage}
\hfill
\begin{minipage}[t]{.486\linewidth}
\centering
\caption{\textbf{Fixed optimization budget.}
Varying $\mu$ with 4096 updates.}
\label{tab:staleness_ablation}
\resizebox{\linewidth}{!}{
\begin{tabular}{cccccccc}
\toprule
\scriptsize $\mu$
& \scriptsize AMC23
& \scriptsize AMC24
& \scriptsize AIME24
& \scriptsize AIME25
& \scriptsize MATH500
& \scriptsize Avg.
& \scriptsize \#Stage \\
\midrule
512  & 65.6 & 43.9 & 30.8 & 13.3 & 82.9 & 47.3 & 8 \\
\rowcolor{gray!20}
1024  & 63.1 & 47.2 & 31.5 & 16.5 & 81.7 & \textbf{48.0} & 4 \\
2048 & 65.7 & 40.6 & 28.1 & 16.9 & 79.1 & 46.1 & 2 \\
4096 & 66.9 & 37.2 & 26.0 & 16.0 & 82.3 & 45.7 & 1 \\
\bottomrule
\end{tabular}}
\end{minipage}
\end{table}
\paragraph{Effect of Staleness with a Fixed Generation Budget.} To isolate the value of reusing stale rollout data, we first generate a fixed rollout dataset by sampling 8 responses for each prompt in the DeepScaleR training set, and then optimize only on this dataset. Under this setting, the batch size determines both the number of updates and the effective staleness $\mu$: smaller batches yield more updates but higher staleness, while larger batches reduce staleness but shorten training. \cref{tab:batch-mu-ablation} varies the batch size in $\{32,64,128,256\}$. Batch size 32 ($\mu=1259$) achieves the best average accuracy of 44.2\%, while batch size 256 drops to 41.9\%. This suggests that rollout data remain useful for many updates, and that additional updates can outweigh the cost of higher staleness.

\paragraph{Effect of Staleness with a Fixed Optimization Budget.} \cref{tab:staleness_ablation} fixes the total number of model updates and varies the effective staleness $\mu$ by changing the rollout refresh frequency, or equivalently the number of generation--optimization stages. Performance remains stable from $\mu\!=\!512$ to $\mu\!=\!2048$, but degrades when using a single stage ($\mu\!=\!4096$). This suggests that stale rollouts remain useful over many updates, but occasional refreshes are still important for maintaining rollout quality. Among all settings, $\mu\!=\!1024$ provides the best average accuracy and is adopted as the default configuration.
\section{Related Work}
\label{sec:relatedworks}
We summarize key related work here and provide a more detailed discussion in Appendix~\ref{app:more_related_work}. Practical RLVR systems are partially off-policy due to rollout reuse ~\citep{shao2024deepseekmathpushinglimitsmathematical, openr1}, rollout--training decoupling in modern RL systems ~\citep{sheng2025hybridflow, slime_github, fu2026areal, noukhovitch2025asynchronous, zhong2025streamrl, he2025history}, and training--inference mismatch~\citep{qi2025defeating}. Prior work stabilizes off-policy RLVR using clipping or trust-region variants, including ratio clipping~\citep{yu2026dapo}, approximate trust regions~\citep{fu2026areal}, sequence-level clipping~\citep{zheng2025group}, asymmetric clipping~\citep{Roux2025TaperedOR}, and gradient-preserving clipping~\citep{su2025klear, chen2025minimax}. The closest work to ours is M2PO~\citep{zheng2026prosperity}, which controls instability through batch-level masking of high-variance tokens. In contrast, we push GRPO to extreme rollout staleness, diagnose its failure mode as localized negative-advantage suffix drift, and stabilize training with a simple trigger--scope filter. Our work complements synchronous/hybrid orchestration systems~\citep{sheng2025hybridflow, slime_github} and asynchronous RLVR systems~\citep{fu2026areal, zhong2025streamrl, han2025asyncflow, noukhovitch2025asynchronous}: we study how to increase GRPO's algorithmic tolerance to rollout staleness, reducing synchronization pressure in both settings. \ourmethod{} realizes this tolerance in a synchronous large-stage framework, where rollout generation and optimization are executed sequentially across stages rather than overlapped in parallel asynchronously.
\section{Conclusion} \label{sec:conclusion_and_discussion}  This work revisits the common assumption that GRPO must remain in a low-staleness, near-on-policy regime to train LLMs effectively. We show that this constraint is overly conservative: stale rollouts can retain useful learning signal far beyond the standard regime, provided that the localized instability of high-staleness optimization is handled explicitly. Motivated by this diagnosis, we propose $\mu$-GRPO, a staged framework that decouples rollout generation from policy optimization and stabilizes training under extreme staleness. Across multiple models and reasoning benchmarks, $\mu$-GRPO converts this staleness tolerance into a substantially improved performance--efficiency trade-off, delivering notable wall-clock speedups while preserving or improving final performance. Extending high-staleness training to broader domains and larger models remains future work.
\clearpage
\paragraph{Acknowledgements.}
We gratefully acknowledge the Center for Research Computing (CRC) at Rice University for providing computational resources, technical support, and research computing services that supported this work. We also thank Jefferson Hernandez for the helpful discussions and constructive suggestions

\bibliographystyle{plainnat} 
\bibliography{references} 
\appendix
\onecolumn
\section{Overview}
\label{app:overview}
This appendix provides details and analyses that support the main text. Appendix~\ref{app:details} describes the experimental setup, including model and dataset details, training hyperparameters, efficiency measurement, evaluation protocols, and the coding and asynchronous experiment settings. Appendix~\ref{app:nav_relaxed_clipping} provides additional analyses of negative-advantage veto and relaxed clipping, including veto-rate statistics, negative-advantage veto scope ablations, stabilization hyperparameter sensitivity, and low-staleness \ourmethod{}. Appendix~\ref{app:prefix-divergence} formalizes the prefix-support mismatch mechanism behind retained harmful suffixes. Appendix~\ref{app:more_related_work} discusses additional related work.
\section{Detailed Experimental Settings} 
\label{app:details}  
\subsection{Main Experiment Settings}
\paragraph{Models \& Dataset.} All experiments are conducted on five backbone language models: Llama-3.2-3B-Instruct~\citep{grattafiori2024llama3herdmodels}, Qwen2.5-Math-7B~\citep{yang2024qwen25mathtechnicalreportmathematical}, Qwen3-Base-1.7B and 8B~\citep{yang2025qwen3technicalreport}, and DeepSeek-7B~\citep{shao2024deepseekmathpushinglimitsmathematical}. DeepSeek-7B denotes the Qwen2.5-Math-7B model distilled from DeepSeek-R1. Following model specifications, we use a context length of 4k tokens for Qwen2.5-Math-7B, which is the maximum supported by this model, and 8k tokens for all other models. All models are trained on the DeepScaleR mathematics dataset~\citep{sun2026improving}. For each training instance, we use a unified instruction-style prompt template. The model is instructed to reason step by step and return the final answer enclosed in \verb|\boxed{}| tags, which enables reliable answer extraction during training and evaluation.  
\begin{tcolorbox}[ colback=gray!5, colframe=black, coltitle=white, title=\textbf{Prompt Template}, fonttitle=\bfseries, arc=6pt, boxrule=1pt, left=8pt, right=8pt, top=6pt, bottom=6pt ] Please solve the following math problem: \{\{Question Description\}\}. The assistant first thinks about the reasoning process step by step and then provides the user with the answer. Return the final answer in \verb|\boxed{}| tags, for example \verb|\boxed{1}|. Let's solve this step by step. 
\end{tcolorbox}  
\vspace{-0.6em}
\paragraph{Training.} All methods are implemented in verl~\citep{sheng2025hybridflow}, with vLLM~\citep{kwon2023efficient} for rollout generation and FSDP~\citep{zhao2023pytorchfsdpexperiencesscaling} for actor optimization. Policy optimization is performed on the actor network only. We use AdamW with a constant learning rate of $1\times10^{-6}$, $\beta_1=0.9$, $\beta_2=0.999$, and weight decay 0.01. Following prior work~\citep{yu2026dapo, zheng2026prosperity}, we omit the explicit KL-divergence regularization term. Rewards are binary: 1.0 if the extracted answer is correct, and 0.0 otherwise.  For rollout generation, we use temperature 1.0 and sample 8 responses per prompt. Each configuration is trained for 4096 model updates with batch size 32, giving the same rollout-sample budget across methods. We evaluate GRPO under both low- and high-staleness settings, $\mu=4$ and $\mu=1024$, and set $\mu=1024$ for \ourmethod{} and M2PO. The high-staleness GRPO baseline uses the same two-phase large-stage rollout pipeline as \ourmethod{}, but optimizes the standard GRPO objective with default clipping and no negative-advantage veto. This isolates the effect of the proposed stabilization mechanism from staged rollout execution. Thus, \ourmethod{} uses four rollout stages, each followed by 1024 model updates.  For GRPO, we set $\epsilon=0.2$. For \ourmethod{}, we use the sequence-level negative-advantage veto scope $\mathcal S_{\mathrm{seq}}$, a conservative superset of the harmful suffix $\mathcal H_\kappa$ identified in Section~\ref{sec:diagnosis}, and set $\tau_c=10^{-4}$; Appendix~\ref{app:nav_scope_ablation} ablates alternative scopes. For M2PO, we use the recommended second-moment threshold of 0.04.  

\paragraph{Efficiency Measurement.} All efficiency experiments use identical $4\times$H200 GPU configurations without parameter offloading. We report two timing metrics. $T_{\mathrm{rollout}}$ denotes the cumulative wall-clock time of all rollout-generation phases across training. $T_{\mathrm{total}}$ denotes the end-to-end wall-clock training time, including rollout generation, policy optimization, synchronization, data movement, and scheduling overhead. This aligned timing protocol is used for all methods in \cref{tab:performance_comparison}. 

\paragraph{Evaluation.} We evaluate trained models on AMC23/24~\citep{aops_amc}, AIME24/25~\citep{aops_aime}, and Math500~\citep{hendrycks2021measuringmathematicalproblemsolving, lightman2024let}. Models are evaluated every 200 model updates, and we report the checkpoint with the best average performance across all benchmarks. Evaluation uses pass@1 accuracy. For AIME24/25, pass@1 is averaged over 16 sampled generations; for AMC23/24 and Math500, pass@1 is averaged over 4 sampled generations. All evaluations use the same context length as training. 

\subsection{Additional Experiment Settings}
\paragraph{Coding Experiment Details.} For the coding experiment, we train Qwen2.5-Coder-7B~\citep{hui2024qwen25codertechnicalreport} on the \texttt{code\_contests} dataset~\citep{sun2026improving} and evaluate on LiveCodeBench~\citep{jain2024livecodebenchholisticcontaminationfree}. We use an 8k context length and report pass@4 accuracy. The training setup follows the main experiments except that the per-update batch size is 16. Both GRPO and \ourmethod{} are trained for 4096 updates; \ourmethod{} uses four rollout stages with $\mu=1024$. Wall-clock time is measured with the aligned protocol on $4\times$H200 GPUs.

\paragraph{Asynchronous Experiment Details.}
For the asynchronous experiment, we use the fully asynchronous policy trainer in verl, where rollout workers and trainers are placed on separate devices and run concurrently. We use $4\times$H200 GPUs, with two allocated to rollout workers and two to trainers. Rollout samples are streamed through a message queue, and policy parameters are periodically synchronized from the trainer to the rollout workers. We synchronize parameters every 4 trainer updates for both async runs. Async GRPO uses no additional rollout staleness allowance, while async $\mu$-GRPO allows up to 255 synchronization intervals of stale rollouts. This gives a maximum rollout lag of $(1+255)\times 4=1024$ updates, matching the $\mu=1024$ synchronous setting. Trainer idle ratio is measured as the fraction of trainer step time spent waiting for and assembling rollout samples.
\section{Additional Analysis of Negative-Advantage Veto and Relaxed Clipping}
\label{app:nav_relaxed_clipping}

This section provides additional analyses and ablations for the two stabilization
components of \ourmethod{}: negative-advantage veto (NAV) and relaxed clipping.
We first analyze how often NAV vetoes token-level updates during training, then
compare different NAV masking scopes, study the sensitivity to the NAV threshold
and clipping upper bound, and finally test whether the \ourmethod{} update rule
remains compatible with the standard low-staleness regime.

\subsection{Fraction of Tokens Vetoed by NAV}
\label{app:veto_fraction}

To better understand how much training signal is filtered by NAV, we report the fraction of tokens vetoed during \ourmethod{} training. Recall that under the sequence-level NAV rule, for a negative-advantage response, once a token's importance ratio falls below the threshold $\tau_c$, NAV masks all remaining tokens in the response after the trigger. We define the NAV-vetoed token fraction as the number of tokens masked by NAV divided by the total number of tokens in the optimization batch. Since NAV masks are recomputed online during optimization, this fraction can vary across updates within a stage. These statistics correspond to the main \ourmethod{} runs reported in Table~\ref{tab:performance_comparison}.

\begin{table}[htbp]
\vspace{-8pt}
\centering
\caption{Mean NAV-vetoed token fraction in each stage, computed as vetoed tokens
divided by all tokens in the optimization batch. Avg. is the unweighted average
across stages.}
\label{tab:veto_fraction}
\begin{tabular}{lccccc}
\toprule
Model & Stage 0 & Stage 1 & Stage 2 & Stage 3 & Veto Avg. \\
\midrule
Llama-3.2-3B-Instruct & 6.08\% & 3.28\% & 0.78\% & 2.42\% & 3.14\% \\
Qwen2.5-Math-7B & 15.24\% & 0.27\% & 0.27\% & 0.32\% & 4.03\% \\
Qwen3-Base-1.7B & 7.07\% & 0.22\% & 0.23\% & 0.42\% & 1.99\% \\
Qwen3-Base-8B & 3.78\% & 0.81\% & 2.83\% & 4.72\% & 3.04\% \\
DeepSeek-7B & 0.61\% & 0.01\% & 0.01\% & 0.05\% & 0.17\% \\
\bottomrule
\end{tabular}
\vspace{-1pt}
\end{table}

Table~\ref{tab:veto_fraction} reports the mean NAV-vetoed token fraction in each stage for all five models, along with the unweighted average across stages. Overall, the average vetoed fraction remains small, ranging from 0.17\% on DeepSeek-7B to 4.03\% on Qwen2.5-Math-7B. This shows that NAV removes only a limited subset of token-level updates rather than discarding a large portion of the rollout data. The result supports the view that NAV primarily suppresses harmful off-support negative-advantage updates while preserving most of the training signal.  

The vetoed fraction is especially small in later stages. For Qwen2.5-Math-7B, the mean vetoed token fraction is 15.24\% in Stage 0 but drops to below 0.4\% in all subsequent stages. Similarly, Qwen3-Base-1.7B decreases from 7.07\% in Stage 0 to below 0.5\% afterward, while DeepSeek-7B remains below 0.7\% across all stages. These trends suggest that NAV is most active when the model is early in training or when the rollout distribution is more prone to off-support negative-advantage updates, and becomes increasingly sparse once the staged training process stabilizes.

\begin{figure*}[htbp]
\vspace{-5pt}
\centering
\includegraphics[width=0.6\textwidth]{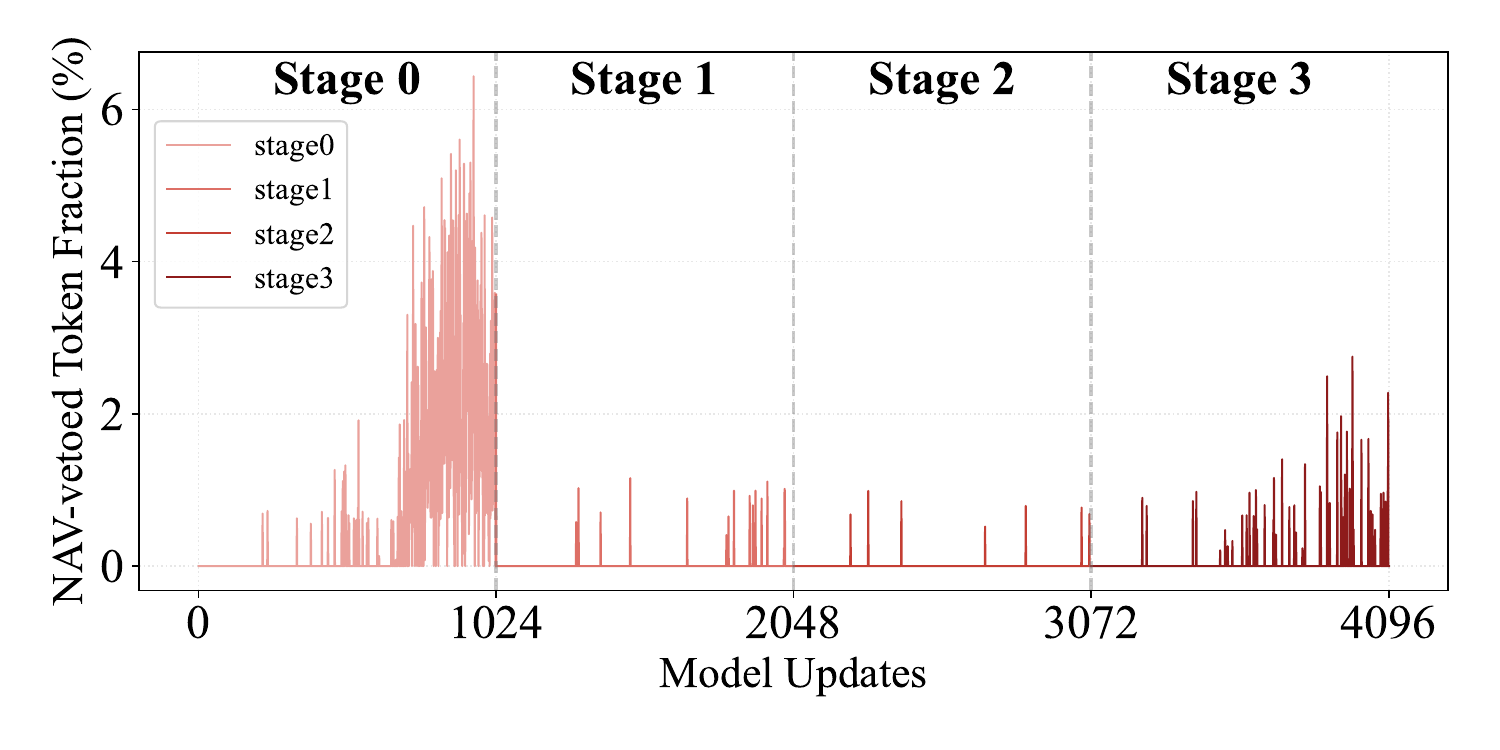}
\vspace{-12pt}
\caption{NAV-vetoed token fraction during $\mu$-GRPO training on DeepSeek-7B.}
\label{fig:veto_frac}
\vspace{-16pt}
\end{figure*}

To complement the averages in Table~\ref{tab:veto_fraction},
Figure~\ref{fig:veto_frac} visualizes the NAV-vetoed token fraction throughout
training for DeepSeek-7B across the four stages. The curve shows that the vetoed
token fraction remains close to zero for most updates, with only occasional peaks
within individual stages. Overall, these results show that NAV stabilizes high-staleness training through sparse token-level intervention.

\subsection{NAV Scope Ablation}
\label{app:nav_scope_ablation}

In the main experiments, we use the sequence-level NAV scope $\mathcal S_{\mathrm{seq}}$ as the default choice. For a negative-advantage response, once a token with importance ratio below $\tau_c$ is detected, it vetoes all tokens in that response. The diagnostic experiments in Section~\ref{sec:diagnosis} identify the post-boundary harmful suffix $\mathcal H_\kappa$ as the critical subset that must be removed to prevent collapse. This suggests that masking scopes that include $\mathcal H_\kappa$, such as $\mathcal S_{\kappa}$ and $\mathcal S_{\mathrm{seq}}$, should also stabilize high-staleness optimization.

To examine the effect of the veto scope, we compare three NAV scopes under the main experimental setting: 4096 model updates with four rollout stages, corresponding to $\mu=1024$ updates per stage. The scopes are the minimal harmful suffix $\mathcal H_\kappa$, the full post-boundary suffix $\mathcal S_{\kappa}$, and the sequence-level scope $\mathcal S_{\mathrm{seq}}$. We report the fraction of vetoed tokens in each stage, computed as the number of vetoed tokens divided by the total number of tokens in the optimization batch. We also report the average veto rate across stages and the average accuracy across the five math benchmarks.

\begin{table}[htbp]
\vspace{-8pt}
\centering
\caption{NAV scope ablation on Qwen2.5-Math-7B. Veto rates are reported per stage.}
\label{tab:nav_scope_ablation}
\begin{tabular}{ccccccc}
\toprule
NAV Scope & Stage 0 & Stage 1 & Stage 2 & Stage 3 & Veto Avg. & Avg. Acc. \\
\midrule
$\mathcal H_\kappa$
& 11.70\% & 0.18\% & 0.21\% & 0.17\% & 3.07\% & 47.9 \\
$\mathcal S_{\kappa}$
& 13.40\% & 0.22\% & 0.23\% & 0.22\% & 3.52\% & {47.6} \\
\rowcolor{gray!20}
$\mathcal S_{\mathrm{seq}}$ 
& 15.24\% & 0.27\% & 0.27\% & 0.32\% & 4.03\% & \textbf{48.0} \\
\bottomrule
\end{tabular}
\vspace{-2pt}
\end{table}

All three scopes remain stable and achieve similar average accuracy. The minimal scope $\mathcal H_\kappa$ vetoes the fewest tokens, whereas $\mathcal S_{\mathrm{seq}}$ vetoes the most because it removes the entire negative-advantage response once a trigger is detected. The comparison suggests that the additional prefix tokens retained by $\mathcal H_\kappa$ and $\mathcal S_{\kappa}$ provide limited benefit once a trigger has appeared. Since the response has negative advantage and has already crossed an off-support boundary under the current policy, the remaining useful contribution from the pre-trigger prefix is small compared with the risk of continuing to update on the same trajectory. Consistent with this view, the more conservative sequence-level scope achieves the best average accuracy despite vetoing more tokens. We therefore use $\mathcal S_{\mathrm{seq}}$ in the main experiments because it is simple, robust, and slightly outperforms the other scopes in this ablation.

\subsection{Stabilization Hyperparameter Sensitivity}
\label{app:stabilization_hyperparams}

We study the sensitivity of \ourmethod{} to two stabilization-related hyperparameters: the NAV threshold $\tau_c$ and the relaxed clipping upper bound. All experiments follow the main experimental setting: Qwen2.5-Math-7B is trained for 4096 model updates with four rollout stages, corresponding to $\mu=1024$ updates per stage. We report the average accuracy across the five math benchmarks.

\paragraph{NAV Threshold Sensitivity.}
The NAV threshold $\tau_c$ determines when a negative-advantage response is treated as crossing the off-support boundary and triggers sequence-level vetoing. Since this threshold is intended to detect extreme low-ratio events, it should be sufficiently small; an overly large threshold may trigger NAV prematurely and remove useful updates.

\begin{table}[htbp]
\vspace{-10pt}
\centering
\small
\caption{NAV threshold sensitivity.}
\label{tab:nav_threshold_ablation}
\begin{tabular}{cc}
\toprule
NAV threshold $\tau_c$ & Avg. Acc. \\
\midrule
$10^{-1}$ & 45.2 \\
$10^{-2}$ & 46.4 \\
\rowcolor{gray!20}
$10^{-4}$ & \textbf{48.0} \\
$10^{-6}$ & 47.6 \\
\bottomrule
\end{tabular}
\vspace{-11pt}
\end{table}

We sweep $\tau_c$ over $\{10^{-1}, 10^{-2}, 10^{-4}, 10^{-6}\}$. As shown in Table~\ref{tab:nav_threshold_ablation}, $\tau_c=10^{-4}$ achieves the best average accuracy of 48.0\%, while $10^{-6}$ remains close at 47.6\%. In contrast, larger thresholds degrade performance, with $10^{-1}$ dropping to 45.2\%. These results suggest that NAV is robust once the threshold is small enough, but setting it too large can over-veto negative-advantage responses before they truly indicate an off-support boundary.

\paragraph{Clipping Upper Bound Sensitivity.}
We also ablate the upper bound of the relaxed clipping interval. In high-staleness training, the standard GRPO clipping range can overly suppress useful gradients from stale but still informative rollouts. \ourmethod{} therefore uses a relaxed clipping range, while relying on NAV to remove destabilizing negative-advantage suffix updates.
\begin{table}[!h]
\vspace{-11pt}
\centering
\small
\caption{Clipping upper bound sensitivity.}
\label{tab:clip_upper_ablation}
\begin{tabular}{cc}
\toprule
Clipping upper bound & Avg. Acc. \\
\midrule
3 & 47.3 \\
\rowcolor{gray!20}
5 & 48.0 \\
10 & 47.8 \\
$+\infty$ & 43.6 \\
\bottomrule
\end{tabular}
\vspace{-3pt}
\end{table}
With the staged pipeline fixed, we vary the clipping upper bound over $\{3, 5, 10, +\infty\}$, where $+\infty$ denotes no upper clipping. Table~\ref{tab:clip_upper_ablation} shows that an upper bound of 5 achieves the best average accuracy, reaching 48.0\%. A tighter bound of 3 slightly underperforms at 47.3\%, suggesting that it still restricts some useful stale-rollout gradients. Increasing the bound to 10 gives comparable  performance at 47.8\%. Removing the upper bound entirely leads to a clear drop to 43.6\%, showing that although relaxed clipping is beneficial, some upper bound control remains necessary for stable optimization. Thus, a moderately relaxed upper bound best balances useful stale-rollout gradients and stability.

\subsection[Low-Staleness μ-GRPO]{Low-Staleness \texorpdfstring{\boldmath$\mu$-GRPO}{μ-GRPO}}
\label{app:onpolicy_relaxed_nav}

We further test whether the \ourmethod{} update rule is compatible with the standard low-staleness regime. Specifically, we run \ourmethod{} with $\mu=4$, matching the rollout refresh frequency of standard GRPO while enabling the relaxed clipping range $[0,5]$ and NAV. This isolates the effect of the update rule from large-staleness rollout reuse. We report the average accuracy across the five math benchmarks and the average token clipping fraction during training.

\begin{table}[tbh]
\vspace{-10pt}
\caption{Low-staleness $\mu$-GRPO on Qwen2.5-Math-7B.}
  \label{tab:onpolicy_relaxed_nav}
  \centering
  \small
\begin{sc}
  \begin{tabular}{c|c|ccccc|cc}
    \toprule
    Methods & $\mu$
    & AMC23 & AMC24
    & AIME24 & AIME25
    & Math500
    & Avg.
    & Clip Frac \\
    \midrule\midrule
    GRPO & 4
      & 68.1 & 42.8 & 29.0 & 15.8 & 81.0 & 47.3 & $4{\times}10^{-4}$ \\
    \ourmethod & 4
      & 69.2 & 42.1 & 30.1 & 14.9 & 82.3 & \textbf{47.7} & $8{\times}10^{-7}$ \\
    \bottomrule
  \end{tabular}
\end{sc}
\end{table}

As shown in Table~\ref{tab:onpolicy_relaxed_nav}, low-staleness \ourmethod{} slightly improves average accuracy over standard GRPO, from 47.3\% to 47.7\%. It also reduces the average clipping fraction from $4\times10^{-4}$ to $8\times10^{-7}$, indicating that the relaxed clipping rule avoids unnecessary clipping when rollout staleness is small. In this low-staleness setting, the NAV veto fraction remains zero throughout training, so the mild gain is primarily attributable to relaxed clipping rather than NAV intervention. These results show that the \ourmethod{} update is compatible with fresh-data learning, and suggest that the standard GRPO clipping range can be overly conservative even when rollouts are relatively fresh.

This result should not be interpreted as a comparison between on-policy and off-policy training. Rather, it shows that the optimization rule itself matters: standard low-staleness GRPO is not necessarily the strongest GRPO-style update. In the main high-staleness setting, \ourmethod{} combines this more permissive update rule with large-stage rollout reuse and NAV-based stabilization, enabling stable and efficient learning under much larger policy drift.

\section{Prefix-Support Mismatch in Retained Harmful Suffixes}
\label{app:prefix-divergence}
This section formalizes the prefix-support mismatch mechanism identified in \cref{sec:diagnosis}. We show why retaining \(\mathcal H_\kappa\)-tokens can be unstable even when their local importance ratios are not tiny: the behavior policy may assign non-negligible mass to these suffix tokens, while the current policy assigns very small mass to the prefixes needed to reach them.

We follow the notation in Section~\ref{sec:preliminaries}: $\beta$ denotes the behavior policy, $\pi_\theta$ denotes the current policy, and $\rho_t(\theta)$ denotes the token-level importance ratio. For notational simplicity, we drop the response index and write a response as $y=(a_1,\ldots,a_T)$.  Assume a finite decoding horizon $T$; variable-length responses can be represented by padding with EOS tokens. Let $\lambda$ be any fixed distribution over token positions $t\in\{1,\ldots,T\}$ with $\lambda_t\ge 0$ and $\sum_t \lambda_t=1$.

For a token occurrence \(z=(x,y,t)\), define the behavior token-occurrence measure
\[
    d\nu_\beta(x,y,t)
    =
    d\mathcal D(x)\,\lambda_t\,\beta(y\mid x),
\]
and the current-prefix occurrence measure
\[
    d\nu_\theta^{\mathrm{pre}}(x,y,t)
    =
    d\mathcal D(x)\,\lambda_t\,
    \pi_\theta(a_{<t}\mid x)\,
    \beta(a_{t:T}\mid x,a_{<t}).
\]
Thus \(\nu_\theta^{\mathrm{pre}}\) changes only the prefix occupancy from \(\beta\) to \(\pi_\theta\), while keeping the replay suffix conditional under \(\beta\). Whenever \(\beta(a_{<t}\mid x)>0\),
\[
    \frac{d\nu_\theta^{\mathrm{pre}}}{d\nu_\beta}(x,y,t)
    =
    R_{t-1}(\theta)
    :=
    \frac{\pi_\theta(a_{<t}\mid x)}{\beta(a_{<t}\mid x)}
    =
    \prod_{j=1}^{t-1}
    \rho_j(\theta),
\]
where
\[
    \rho_t(\theta)
    :=
    \frac{\pi_\theta(a_t\mid x,a_{<t})}
         {\beta(a_t\mid x,a_{<t})}.
\]
Recall from Section~\ref{sec:diagnosis} that the {trigger boundary} is the first trigger-token position
\[
    \kappa(y)
    :=
    \min\{t:\, A(y)<0,\ \rho_t(\theta)<\tau_c\},
\]
with $\kappa(y)=\infty$ if the set is empty. The non-trigger suffix is
\[
    \mathcal H_\kappa(y)
    :=
    \{t>\kappa(y):\, A(y)<0,\ \rho_t(\theta)\ge \tau_c\}.
\]
Let \(\mathcal U_{\mathcal H}(\theta)\subseteq\{(x,y,t): t\in \mathcal H_\kappa(y)\}\) denote any retained subset of these \(\mathcal H_\kappa\)-tokens. If the update rule retains all of \(\mathcal H_\kappa(y)\), then \(\mathcal U_{\mathcal H}(\theta)=\{(x,y,t):t\in \mathcal H_\kappa(y)\}\).

We use the Pearson chi-square convention
\[
    \chi^2(P\|Q)
    :=
    \int \left(\frac{dP}{dQ}-1\right)^2\,dQ,
\]
with \(\chi^2(P\|Q)=+\infty\) when \(P\) is not absolutely continuous with respect to \(Q\).

\begin{theorem}[Retained \(\mathcal H_\kappa\)-tokens can induce arbitrarily large prefix chi-square divergence]
\label{thm:retained-H-chi2}
Let
\[
    m_{\mathcal H}
    :=
    \nu_\beta(\mathcal U_{\mathcal H}),
    \qquad
    q_{\mathcal H}
    :=
    \nu_\theta^{\mathrm{pre}}(\mathcal U_{\mathcal H})
    =
    \mathbb E_{\nu_\beta}
    \left[
        R_{t-1}(\theta)\mathbf 1\{(x,y,t)\in\mathcal U_{\mathcal H}\}
    \right].
\]
Assume \(m_{\mathcal H}>0\) and
\[
    q_{\mathcal H} \le r\,m_{\mathcal H}
    \qquad\text{for some } r\in(0,1).
\]
Then
\[
    \chi^2\!\left(\nu_\beta\middle\|\nu_\theta^{\mathrm{pre}}\right)
    \ge
    \chi^2\!\left(
        \mathrm{Bern}(m_{\mathcal H})
        \middle\|
        \mathrm{Bern}(q_{\mathcal H})
    \right)
    =
    \frac{(m_{\mathcal H}-q_{\mathcal H})^2}{q_{\mathcal H}(1-q_{\mathcal H})}.
\]
Consequently,
\[
    \chi^2\!\left(\nu_\beta\middle\|\nu_\theta^{\mathrm{pre}}\right)
    \ge
    \frac{m_{\mathcal H}(1-r)^2}{r}.
\]
In particular, for fixed \(m_{\mathcal H}>0\), the lower bound diverges as \(r\downarrow 0\). If \(q_{\mathcal H}=0<m_{\mathcal H}\), then
\[
    \chi^2\!\left(\nu_\beta\middle\|\nu_\theta^{\mathrm{pre}}\right)
    =
    +\infty .
\]
\end{theorem}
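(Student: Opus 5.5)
The argument is a data-processing (coarse-graining) inequality applied to the indicator of the retained set $\mathcal U_{\mathcal H}=\mathcal U_{\mathcal H}(\theta)$, followed by an elementary bound on a two-point chi-square. Throughout, $\theta$ is fixed, so $\mathcal U_{\mathcal H}$ is simply a measurable set of token occurrences $(x,y,t)$.

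\emph{Normalization.} First check that $\nu_\theta^{\mathrm{pre}}$ is a probability measure on token occurrences. For each fixed $t$, summing $\pi_\theta(a_{<t}\mid x)\,\beta(a_{t:T}\mid x,a_{<t})$ over $y$ gives $1$: sum out the suffix under the $\beta$-conditional, then the prefix under $\pi_\theta$. Hence both $\nu_\beta$ and $\nu_\theta^{\mathrm{pre}}$ are probability measures, and $m_{\mathcal H},q_{\mathcal H}\in[0,1]$.

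\emph{Reduction to a two-point chi-square.} Consider the map $f(x,y,t)=\mathbf 1\{(x,y,t)\in\mathcal U_{\mathcal H}\}$. Its pushforwards are
\[
f_\#\nu_\beta=\mathrm{Bern}(m_{\mathcal H}),\qquad f_\#\nu_\theta^{\mathrm{pre}}=\mathrm{Bern}(q_{\mathcal H}),
\]
where the identity $q_{\mathcal H}=\mathbb E_{\nu_\beta}[R_{t-1}\mathbf 1_{\mathcal U_{\mathcal H}}]$ is the stated Radon--Nikodym formula. That formula is valid because $\nu_\beta$-a.e.\ we have $\beta(a_{<t}\mid x)>0$, and $\nu_\theta^{\mathrm{pre}}\ll\nu_\beta$ on the finite-horizon token space wherever $\beta$ has support.

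\emph{Data-processing step.} This is the only nontrivial step. Chi-square is an $f$-divergence with convex generator $\phi(u)=(u-1)^2$, so the data-processing inequality gives
\[
\chi^2(\nu_\beta\|\nu_\theta^{\mathrm{pre}})\ge\chi^2(\mathrm{Bern}(m_{\mathcal H})\|\mathrm{Bern}(q_{\mathcal H})).
\]
In the absolutely continuous case I would prove this directly rather than cite it. Write $L=d\nu_\beta/d\nu_\theta^{\mathrm{pre}}$. By Cauchy--Schwarz (equivalently, Jensen on each cell $A\in\{\mathcal U_{\mathcal H},\mathcal U_{\mathcal H}^c\}$),
\[
\int_A L^2\,d\nu_\theta^{\mathrm{pre}}\ge \frac{\nu_\beta(A)^2}{\nu_\theta^{\mathrm{pre}}(A)}.
\]
Summing over the two cells and subtracting $1$ yields the two-point chi-square.

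The remaining cases are handled separately, and this is where I expect the main care to be needed. If $\nu_\beta\not\ll\nu_\theta^{\mathrm{pre}}$, the left side is $+\infty$ by convention and the inequality is trivial. In particular, if $q_{\mathcal H}=0<m_{\mathcal H}$, the set $\mathcal U_{\mathcal H}$ witnesses this failure of absolute continuity, which gives the final claim $\chi^2=+\infty$. One must also treat the case $q_{\mathcal H}=1$, but it is excluded because $q_{\mathcal H}\le r\,m_{\mathcal H}<1$.

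\emph{Closed form and quantitative bound.} For Bernoulli laws,
\[
\chi^2(\mathrm{Bern}(m)\|\mathrm{Bern}(q))=\frac{(m-q)^2}{q}+\frac{(m-q)^2}{1-q}=\frac{(m-q)^2}{q(1-q)}.
\]
For the consequence, drop the factor $1-q\le1$ to get a lower bound of $(m-q)^2/q$. This is decreasing in $q$ on $(0,m)$, so substituting $q\le rm$ gives
\[
\frac{(m-q)^2}{q}\ge\frac{(m-rm)^2}{rm}=\frac{m(1-r)^2}{r},
\]
which diverges as $r\downarrow0$. The case $q_{\mathcal H}=0$ was already covered above.
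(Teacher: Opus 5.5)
Your proof is correct and follows the paper's argument. The core step is the same Cauchy--Schwarz bound $\int_A L^2\,d\nu_\theta^{\mathrm{pre}}\ge \nu_\beta(A)^2/\nu_\theta^{\mathrm{pre}}(A)$ on the two cells $A\in\{\mathcal U_{\mathcal H},\mathcal U_{\mathcal H}^c\}$, summed and reduced by one; the data-processing language is only a framing of it. Your monotonicity argument for $(m-q)^2/q$ on $(0,m)$ is equivalent to the paper's separate bounds $m_{\mathcal H}-q_{\mathcal H}\ge m_{\mathcal H}(1-r)$ and $q_{\mathcal H}(1-q_{\mathcal H})\le r\,m_{\mathcal H}$.

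Your extra checks fill small gaps the paper passes over: normalization of $\nu_\theta^{\mathrm{pre}}$ (needed for $\nu_\theta^{\mathrm{pre}}(\mathcal U_{\mathcal H}^c)=1-q_{\mathcal H}$), the case $\nu_\beta\not\ll\nu_\theta^{\mathrm{pre}}$ with $q_{\mathcal H}>0$, and $q_{\mathcal H}<1$. Your aside justifying $q_{\mathcal H}=\mathbb E_{\nu_\beta}[R_{t-1}\mathbf 1_{\mathcal U_{\mathcal H}}]$ is incomplete: it also requires that $\nu_\theta^{\mathrm{pre}}$ put no mass on $\mathcal U_{\mathcal H}$ where $\beta(a_{<t}\mid x)=0$. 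This does not affect the proof, since the argument only uses $q_{\mathcal H}=\nu_\theta^{\mathrm{pre}}(\mathcal U_{\mathcal H})$.
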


\begin{proof}
Let \(P=\nu_\beta\), \(Q=\nu_\theta^{\mathrm{pre}}\), and
\(U=\mathcal U_{\mathcal H}\). If \(Q(U)=0<P(U)\), then \(P\) is not absolutely continuous
with respect to \(Q\), so \(\chi^2(P\|Q)=+\infty\).

Now assume \(q_{\mathcal H}=Q(U)>0\). Let \(L=dP/dQ\). By Cauchy--Schwarz,
\[
    \int_U L^2\,dQ
    \ge
    \frac{\left(\int_U L\,dQ\right)^2}{Q(U)}
    =
    \frac{P(U)^2}{Q(U)}
    =
    \frac{m_{\mathcal H}^2}{q_{\mathcal H}}.
\]
Applying the same argument to \(U^c\),
\[
    \int_{U^c} L^2\,dQ
    \ge
    \frac{P(U^c)^2}{Q(U^c)}
    =
    \frac{(1-m_{\mathcal H})^2}{1-q_{\mathcal H}}.
\]
Therefore,
\[
\begin{aligned}
    \chi^2(P\|Q)
    &=
    \int L^2\,dQ -1 \\
    &\ge
    \frac{m_{\mathcal H}^2}{q_{\mathcal H}}
    +
    \frac{(1-m_{\mathcal H})^2}{1-q_{\mathcal H}}
    -1 \\
    &=
    \frac{(m_{\mathcal H}-q_{\mathcal H})^2}{q_{\mathcal H}(1-q_{\mathcal H})}.
\end{aligned}
\]
Using \(q_{\mathcal H}\le r m_{\mathcal H}\), we have
\[
    m_{\mathcal H}-q_{\mathcal H} \ge m_{\mathcal H}(1-r),
    \qquad
    q_{\mathcal H}(1-q_{\mathcal H})\le q_{\mathcal H} \le r m_{\mathcal H}.
\]
Thus
\[
    \chi^2(P\|Q)
    \ge
    \frac{m_{\mathcal H}^2(1-r)^2}{r m_{\mathcal H}}
    =
    \frac{m_{\mathcal H}(1-r)^2}{r}.
\]
This proves the claim.
\end{proof}

\begin{corollary}[The blow-up is compatible with non-tiny local ratios]
\label{cor:non-tiny-local-ratio}
Fix any threshold \(\tau_c\in(0,1]\) and any target \(M>0\). There exist behavior and current policies \(\beta\) and \(\pi_\theta\), and a retained harmful suffix set \(\mathcal U_{\mathcal H}\), such that every retained suffix token in \(\mathcal U_{\mathcal H}\) satisfies
\[
    \rho_t(\theta)\ge \tau_c,
\]
but
\[
    \chi^2\!\left(\nu_\beta\middle\|\nu_\theta^{\mathrm{pre}}\right)
    >
    M.
\]
\end{corollary}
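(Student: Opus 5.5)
The plan is to build an explicit small example and then apply Theorem~\ref{thm:retained-H-chi2}. Concretely, we want every retained suffix token to have local ratio at least $\tau_c$ (indeed exactly $1$), while the prefix occupancy ratio $R_{t-1}(\theta)$ on those tokens is as small as we like. The idea is to put all of the small ratio on a single trigger token at position $\kappa=1$. The trigger lies in $\mathcal T$, not in $\mathcal H_\kappa$, so it is not retained, and its tiny ratio enters every later $R_{t-1}$.

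\textbf{Construction.} Take one prompt $x$ (so $\mathcal D=\delta_x$), horizon $T=2$, vocabulary $\{u,w\}$, and $\lambda=(\lambda_1,\lambda_2)=(1/2,1/2)$.
\begin{itemize}
\item At position $1$, let $\beta(u\mid x)=1/2$ and $\pi_\theta(u\mid x)=\delta/2$, where $\delta\in(0,\tau_c)$ will be chosen later.
\item At position $2$, let both policies agree, $\beta(\cdot\mid x,a_1)=\pi_\theta(\cdot\mid x,a_1)$, with all mass on $u$.
\item Assign the response $y=(u,u)$ a negative advantage, $A(y)<0$.
\end{itemize}
Then $\rho_1=\delta<\tau_c$, so $\kappa(y)=1$. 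Also $\rho_2=1\ge\tau_c$, so $2\in\mathcal H_\kappa(y)$. Set $\mathcal U_{\mathcal H}=\{(x,(u,u),2)\}$. Every retained token has $\rho_t=1\ge\tau_c$, which gives the first requirement of the corollary.

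\textbf{Computing the masses.} For the behavior measure,
\[
m_{\mathcal H}=\nu_\beta(\mathcal U_{\mathcal H})=\lambda_2\,\beta((u,u)\mid x)=\tfrac14>0 .
\]
For the current-prefix measure, we use the Radon--Nikodym identity from the setup, $d\nu_\theta^{\mathrm{pre}}/d\nu_\beta=R_{t-1}(\theta)$. At $t=2$ this gives $R_1=\rho_1=\delta$, so
\[
q_{\mathcal H}=\delta\,m_{\mathcal H}.
\]
Hence the hypothesis of Theorem~\ref{thm:retained-H-chi2} holds with $r=\delta\in(0,1)$.

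\textbf{Applying the theorem.} The theorem yields
\[
\chi^2\!\left(\nu_\beta\middle\|\nu_\theta^{\mathrm{pre}}\right)\ge \frac{m_{\mathcal H}(1-\delta)^2}{\delta}=\frac{(1-\delta)^2}{4\delta}.
\]
The right-hand side tends to $+\infty$ as $\delta\downarrow0$. We choose $\delta<\min\{\tau_c,\,1/2,\,1/(8M+1)\}$. Since $\delta<1/2$, we have $(1-\delta)^2\ge 1/4$, so the bound exceeds $1/(16\delta)$. Because $\delta<1/(16M)$, this gives a divergence greater than $M$. (If needed, replace $1/(8M+1)$ by $1/(16M)$ in the choice of $\delta$.)

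\textbf{Main checks.} The only delicate points are consistency checks.
\begin{itemize}
\item $\nu_\theta^{\mathrm{pre}}$ must be a probability measure, so that the $\chi^2$ identity in the theorem's proof applies. This holds because summing $\pi_\theta(a_{<t}\mid x)\,\beta(a_{t:T}\mid x,a_{<t})$ over all $y$ gives $1$ for each $t$.
\item $\beta(a_{<t}\mid x)>0$ must hold on $\mathcal U_{\mathcal H}$. Here $\beta(u\mid x)=1/2>0$.
\item The trigger definition must place position $2$ in $\mathcal H_\kappa$ rather than $\mathcal T$. This holds because $\rho_2=1\ge\tau_c$ even when $\tau_c=1$, since the condition is $\ge$.
\end{itemize}
With these checks done, the corollary follows.
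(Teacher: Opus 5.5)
Your proof is correct and is essentially the paper's own argument: one prompt, a trigger at position $1$ with ratio $r=\delta<\tau_c$, and a deterministic second token with $\rho_2=1$ as the single retained occurrence. This gives $q_{\mathcal H}=r\,m_{\mathcal H}$, and Theorem~\ref{thm:retained-H-chi2} then finishes; the paper keeps $m$ and $\lambda_2$ general and picks $r<\min\{\tau_c,1/2,\lambda_2 m/(4M)\}$. With your $m=\lambda_2=1/2$ that threshold is exactly your corrected $1/(16M)$, whereas your initial choice $1/(8M+1)$ does not give $\delta<1/(16M)$, so the chain $(1-\delta)^2/(4\delta)>1/(16\delta)>M$ as written does not cover it; use $1/(16M)$, as you note.
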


\begin{proof}
Consider one prompt and two token positions. Let the first-token vocabulary contain two tokens \(c\) and \(b\). Choose \(m\in(0,1)\), and choose
\[
    r < \min\{\tau_c,\,1/2,\,\lambda_2 m/(4M)\},
\]
where \(\lambda_2>0\) is the position weight assigned to the second token.

Let
\[
    \beta(c\mid x)=m,
    \qquad
    \pi_\theta(c\mid x)=rm,
\]
with the remaining probability assigned to \(b\). After prefix \(c\), let both policies deterministically emit the second token \(h\):
\[
    \beta(h\mid x,c)=1,
    \qquad
    \pi_\theta(h\mid x,c)=1.
\]
Assign negative advantage to the trajectory \(y=(c,h)\), i.e., \(A(y)<0\).

For \(y=(c,h)\), the first token has
\[
    \rho_1(\theta)
    =
    \frac{\pi_\theta(c\mid x)}{\beta(c\mid x)}
    =
    r
    <
    \tau_c,
\]
so \(\kappa(y)=1\). The second token has
\[
    \rho_2(\theta)
    =
    \frac{\pi_\theta(h\mid x,c)}{\beta(h\mid x,c)}
    =
    1
    \ge
    \tau_c,
\]
so \(2\in \mathcal H_\kappa(y)\). Its prefix occupancy is
\[
    R_1(\theta)=\rho_1(\theta)=r.
\]
Let the update rule retain this second-token occurrence, and define
\[
    \mathcal U_{\mathcal H}=\{(x,(c,h),2)\}.
\]
Then
\[
    m_{\mathcal H}=\nu_\beta(\mathcal U_{\mathcal H})=\lambda_2 m,
    \qquad
    q_{\mathcal H}=\nu_\theta^{\mathrm{pre}}(\mathcal U_{\mathcal H})=\lambda_2 r m.
\]
By Theorem~\ref{thm:retained-H-chi2},
\[
    \chi^2\!\left(\nu_\beta\middle\|\nu_\theta^{\mathrm{pre}}\right)
    \ge
    \frac{\lambda_2 m(1-r)^2}{r}.
\]
Since \(r<1/2\), \((1-r)^2>1/4\), and therefore
\[
    \chi^2\!\left(\nu_\beta\middle\|\nu_\theta^{\mathrm{pre}}\right)
    >
    \frac{\lambda_2 m}{4r}
    >
    M.
\]
Thus the retained token can have a perfectly benign local ratio \(\rho_2(\theta)=1\), while the induced prefix mismatch is arbitrarily large.
\end{proof}

\begin{remark}[Direction of the Pearson divergence]
\label{rem:chi2-direction}
The divergence in Theorem~\ref{thm:retained-H-chi2} is the reverse Pearson divergence \(\chi^2(\nu_\beta\|\nu_\theta^{\mathrm{pre}})\), which is sensitive to behavior-supported prefixes that have very small current occupancy. The opposite direction satisfies
\[
    \chi^2\!\left(\nu_\theta^{\mathrm{pre}}\middle\|\nu_\beta\right)
    =
    \mathbb E_{\nu_\beta}
    \left[
        \left(R_{t-1}(\theta)-1\right)^2
    \right],
\]
so \(R_{t-1}\ll 1\) on a set of fixed behavior mass does not by itself force an unbounded contribution in that direction. This is why the missing-support statement should be phrased as a behavior-to-current, or reverse-Pearson, prefix divergence.
\end{remark}
\section{More Related Work}
\label{app:more_related_work}

\paragraph{Off-policy RLVR.}
Despite being framed as on-policy methods, practical RLVR implementations are inherently off-policy: rollout data are reused across multiple gradient updates~\citep{shao2024deepseekmathpushinglimitsmathematical, openr1}; modern RL systems decouple rollout generation from training for throughput~\citep{fu2026areal, slime_github, noukhovitch2025asynchronous, zhong2025streamrl, he2025history}; and training--inference mismatch from numerical precision differences can introduce additional distribution shift~\citep{qi2025defeating}. To stabilize training under off-policy conditions, prior work has proposed various strategies to control policy updates, including ratio clipping~\citep{yu2026dapo}, approximate trust regions~\citep{fu2026areal}, sequence-level clipping~\citep{zheng2025group}, asymmetric clipping~\citep{Roux2025TaperedOR}, and gradient-preserving clipping~\citep{su2025klear, chen2025minimax}. While these methods improve RLVR under moderate staleness, most focus on relatively small staleness values and do not explicitly identify why naive large-staleness GRPO fails. M2PO~\citep{zheng2026prosperity} is the closest work to ours. It identifies excessive clipping under staleness and the prosperity-before-collapse phenomenon, where training without a trust region initially improves before eventually collapsing. M2PO addresses these issues by constraining the second moment of importance weights at the batch level to selectively mask high-variance tokens. In contrast, we diagnose the failure mode of large-$\mu$ GRPO as localized negative-advantage suffix drift and stabilize training with a simple trigger--scope filter, without requiring batch-level second-moment statistics or iterative token masking. Moreover, our work introduces an off-policy training paradigm that fully decouples generation and optimization to fundamentally address the efficiency bottleneck, an aspect M2PO does not explore.

\paragraph{RLVR Efficiency.}
RLVR systems improve efficiency through both synchronous/hybrid orchestration and asynchronous execution. In synchronous pipelines, rollout generation and policy optimization often alternate between inference and training engines. To make frequent switching practical, systems may keep part of the training state resident on GPU during rollout, reducing reload overhead but limiting memory available for high-throughput inference. Hybrid orchestration frameworks such as HybridFlow/verl improve scheduling across these engines~\citep{sheng2025hybridflow, slime_github}, but frequent rollout refreshes and weight synchronization can remain a bottleneck when the RL algorithm only tolerates low staleness. Asynchronous and streaming systems instead overlap rollout workers and trainers through queues or streamed data transfer~\citep{fu2026areal, zhong2025streamrl, han2025asyncflow, noukhovitch2025asynchronous}. These systems improve utilization, but still benefit from staleness-robust objectives because larger tolerated rollout lag reduces synchronization pressure. \ourmethod{} is complementary to both directions: by stabilizing GRPO at $\mu=1024$, it reduces rollout refreshes and weight synchronizations, enabling rollout generation to run as a standalone phase that can fully use GPU memory for inference, while also permitting larger lag in asynchronous pipelines.

\end{document}